\icmltitlerunning{Feature Importance Explanations for Temporal Black-Box Models}
\begin{document}
	
\twocolumn[
\icmltitle{Feature Importance Explanations for Temporal Black-Box Models}

% It is OKAY to include author information, even for blind
% submissions: the style file will automatically remove it for you
% unless you've provided the [accepted] option to the icml2021
% package.

% List of affiliations: The first argument should be a (short)
% identifier you will use later to specify author affiliations
% Academic affiliations should list Department, University, City, Region, 
%Country
% Industry affiliations should list Company, City, Region, Country

% You can specify symbols, otherwise they are numbered in order.
% Ideally, you should not use this facility. Affiliations will be numbered
% in order of appearance and this is the preferred way.
%\icmlsetsymbol{equal}{*}

\begin{icmlauthorlist}
	\icmlauthor{Akshay Sood}{wisc}
	\icmlauthor{Mark Craven}{wisc}
\end{icmlauthorlist}

\icmlaffiliation{wisc}{Department of Computer Sciences, Department of 
Biostatistics \& Medical Informatics, University of Wisconsin-Madison}

\icmlcorrespondingauthor{Akshay Sood}{sood@cs.wisc.edu}

% You may provide any keywords that you
% find helpful for describing your paper; these are used to populate
% the "keywords" metadata in the PDF but will not be shown in the document
%\icmlkeywords{Machine Learning, ICML}

\vskip 0.3in
]

% this must go after the closing bracket ] following \twocolumn[ ...

% This command actually creates the footnote in the first column
% listing the affiliations and the copyright notice.
% The command takes one argument, which is text to display at the start of the 
%footnote.
% The \icmlEqualContribution command is standard text for equal contribution.
% Remove it (just {}) if you do not need this facility.

\printAffiliationsAndNotice{}  % leave blank if no need to mention equal 
%%contribution
%\printAffiliationsAndNotice{\icmlEqualContribution} % otherwise use the 
%standard text.

\newtheorem{thm}{Theorem}[section]
\newtheorem{prop}[thm]{Proposition}
\newtheorem*{cor}{Corollary}
\newcommand{\X}{X_1}
\newcommand{\Xk}{X_{1, k}}
\newcommand{\Xkprime}{X_{1, k'}}
\newcommand{\bX}{\mathbf{X_1}}
\newcommand{\x}{x_1}
\newcommand{\Y}{X_2}
\newcommand{\bY}{\mathbf{X_2}}
\newcommand{\y}{x_2}
\newcommand{\Z}{Y}
\newcommand{\z}{y}
\newcommand{\g}{g_1}
\newcommand{\h}{g_2}
\newcommand{\E}{\mathbb{E}}
\newcommand{\cov}{\mathrm{cov}}
\newcommand{\W}{W}
\newcommand{\Wbar}{\overline{W}}
\newcommand{\Wstar}{W^*}
\newcommand{\Wstarbar}{\Wbar^*}

\begin{abstract}
Models in the supervised learning framework may capture rich and complex 
representations over the features that are hard for humans to interpret. 
Existing methods to 
explain such models are often specific to architectures and data where
the features do not have a time-varying component. In 
this work, we propose TIME, a method to explain models that are inherently 
temporal 
in nature. Our approach (i) uses a model-agnostic permutation-based approach to 
analyze global feature importance, (ii) identifies the importance of salient 
features 
with respect to their temporal
ordering as well as localized windows of influence, and (iii) uses hypothesis 
testing to provide statistical rigor.

\end{abstract}

\section{Introduction}

The last decade has seen an explosion in models that 
learn rich representations over large, complex parameter spaces. These have 
increasingly been applied in domains with a high degree of social impact, such 
as healthcare, but this very complexity makes them black-boxes whose 
decision-making is hard to 
explain, a critical deficit in many such domains. There has thus been a 
concomitant rise in methods to generate explanations for black-box models.
Existing research has largely focused on explaining models trained over tabular 
data, 
where each feature takes a single value per instance, instead of explaining 
temporal models,
where the instances consist of sequences or time series. 

Most explanation methods, including model-agnostic feature importance methods 
such as LIME~\cite{ribeiroWhyShouldTrust2016} and 
SHAP~\cite{lundbergUnifiedApproachInterpreting2017}, are designed for tabular 
representations.~\citet{ismailBenchmarkingDeepLearning2020} demonstrate the 
unreliability and inaccuracy of several explanation methods designed for 
tabular data in identifying feature importance in temporal models. Some 
approaches have focused on interpreting specific temporal model architectures, 
such as 
recurrent neural networks~\cite{karpathyVisualizingUnderstandingRecurrent2015, 
sureshClinicalInterventionPrediction2017, ismailInputCellAttentionReduces2019} 
and attention-based 
models~\cite{choiRETAINInterpretablePredictive2016, 
zhangATTAINAttentionbasedTimeAware2019}, while others have explored methods to 
encourage temporal models during training to be more interpretable using
tree regularization~\cite{wuSparsityTreeRegularization2017} and game-theoretic 
characterizations~\cite{leeGameTheoreticInterpretabilityTemporal2018}. However, 
model-agnostic explanation for temporal models has begun to be addressed only 
recently.~\citet{tonekaboniWhatWentWrong2020} propose FIT, a method to assign 
importance scores to observations over time for sequence-sequence 
models, and ~\citet{bentoTimeSHAPExplainingRecurrent2020} propose TimeSHAP, an 
extension of 
SHAP to temporal models. Importantly, these methods focus on local 
interpretability, which seeks to explain individual predictions in terms of 
their 
important features, rather than global interpretability, which seeks to 
identify features important to the model as a 
whole~\cite{ibrahimGlobalExplanationsNeural2019, 
covertUnderstandingGlobalFeature2020}.

Our approach is most similar to permutation-based 
methods that identify global feature importance for learned models.
\citet{breimanRandomForests2001} uses permutations to identify 
important features in random forests, and many
variants of feature importance using permutation tests have since been studied 
~\cite{stroblConditionalVariableImportance2008,
	ojalaPermutationTestsStudying2010, 
	altmannPermutationImportanceCorrected2010, 
	gregoruttiGroupedVariableImportance2015, 
	fisherAllModelsAre2019, zhouUnbiasedMeasurementFeature2020}.
The simplicity and 
generality of permutation tests makes them attractive as a tool for
model-agnostic explanation.
However, existing methods have focused on permutations of individual features, 
or in some cases groups of features, as part of a tabular representation.
We extend permutation 
tests to temporal models by using the temporal structure of the data to 
guide the selection of features to be permuted and the resulting explanations.

\begin{figure*}[t]
	\centering
	\begin{subfigure}[t]{\linewidth}
		\includegraphics[width=\textwidth]{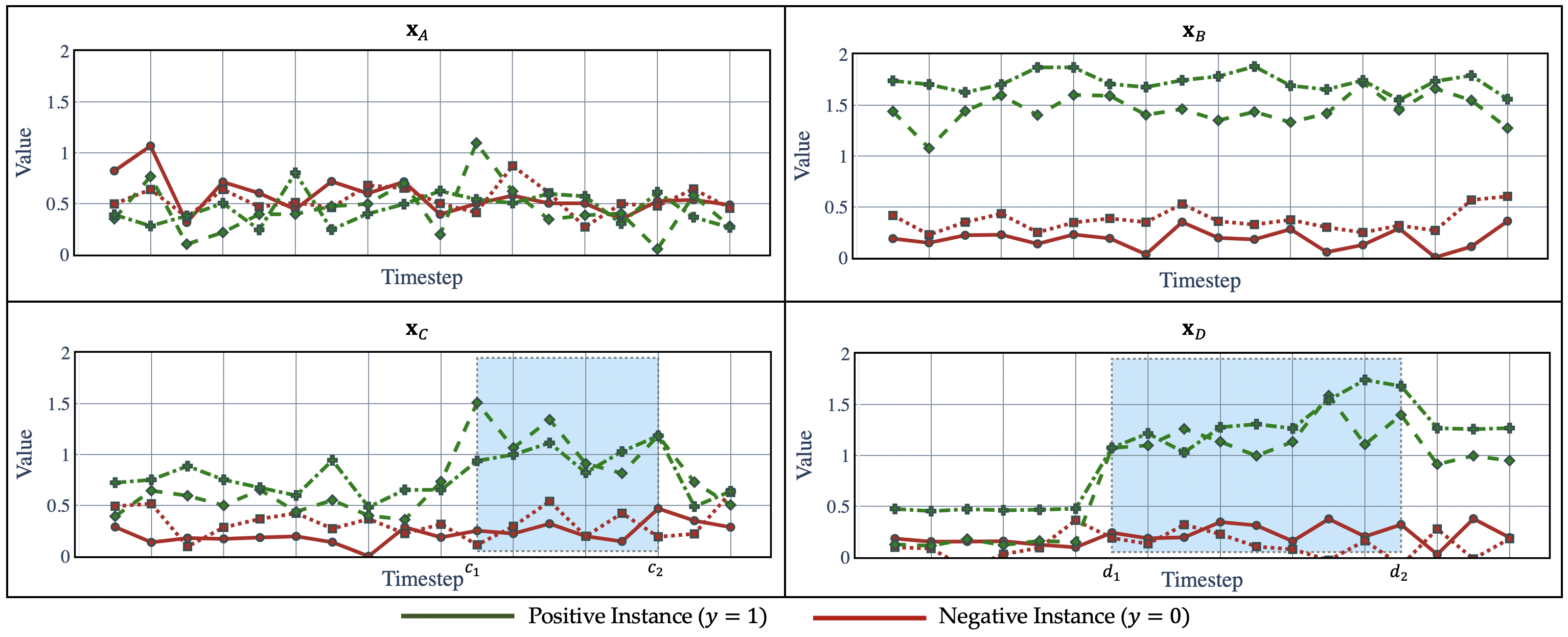}
		\caption{}
		\label{fig:series}
	\end{subfigure}
	\begin{subfigure}[t]{\linewidth}
		\includegraphics[width=\textwidth]{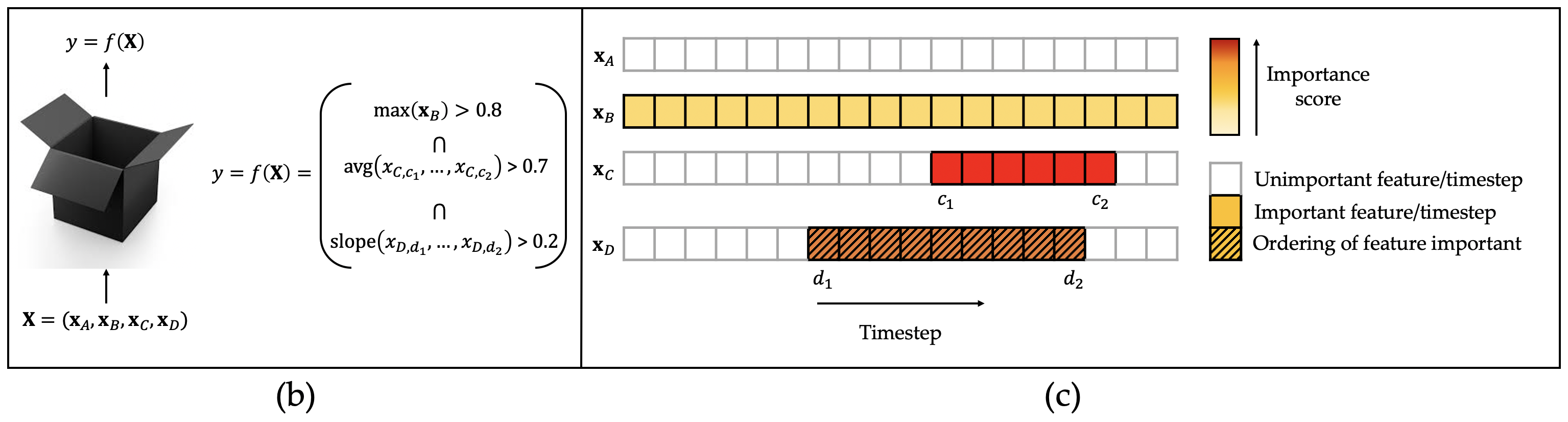}
		\label{fig:toymodel}
	\end{subfigure}
	\caption{(a) Time series for positive (green) and negative (red) instances 
	for four different 
		features, illustrating temporal properties of the features that a 
		learned 
		model may capture.
		(b) A trained binary classification model 
		over the four time-varying features, whose 
		underlying function uses the features' temporal properties to capture 
		the 
		target concept. $\mathbf{x}_A$ is 
		not used by the model; all timesteps for $\mathbf{x}_B$ are equally 
		important; the model focuses on windows $[c_1, c_2]$ and 
		$[d_1, d_2]$ for $\mathbf{x}_C$ and $\mathbf{x}_D$ 
		respectively; the ordering of values is 
		important only for $\mathbf{x}_D$. (c) The output of TIME, 
		showing 
		for each feature (i) its overall importance to the model, (ii) the most 
		important window that the model focuses on, and (iii) whether the 
		ordering 
		of the values within the window is important to the model.}
	\label{fig:illustration}
\end{figure*}

In this work, we propose Temporal Importance Model Explanation (TIME), a 
method for explaining temporal black-box models.
Our approach is model-agnostic, produces global explanations, and elicits 
specific properties of temporal models.
It takes as input a learned model over features representing 
sequences or time-series, and a test data set used to evaluate the model, 
and does the following: (i) it identifies  
features that are important for the model's predictions across the distribution 
of 
instances, (ii) for each such feature, it identifies the most important 
temporal window that the model focuses on, (iii) it determines whether the 
model's predictions are dependent on the ordering of the values within the 
window, (iv) it uses hypothesis testing and a false discovery 
rate control 
methodology to identify important features and their temporal properties with 
statistical rigor, and (v) it treats the model as a black-box, and thus and may 
be used to analyze a wide variety of temporal or sequential models.
Figure~\ref{fig:illustration} illustrates the setting and our approach.
\section{Methods}

\subsection{Identifying Important Features/Timesteps via Permutation}
\label{sec:feature-relevance}

\paragraph{Non-temporal models.} We first outline the case of a model trained 
on a tabular data set where each feature takes a single value per 
instance.
Consider a model $f$ over $D$ features, trained to predict a target $y$. 
We are interested in examining the importance of a given feature $j$ for the 
model in predicting $y$. To focus on the 
model's generalization performance, we assume a test set comprising $M$ 
instances is available to analyze the model.
Let $(\mathbf{x}^{(i)}, y^{(i)})$ be the $i^{th}$ instance-target pair, and 
$\mathcal{L}$ be a loss function linking the model 
output $f(\mathbf{x})$ to the target $y$.

We define the \textit{baseline} output of the model $f$ for instance $i$ as:
\begin{equation}
f \left( \mathbf{x}^{(i)} \right) = f \left( x_1^{(i)}, x_2^{(i)}, 
\ldots, 
x_j^{(i)}, \ldots, x_D^{(i)} \right).
\end{equation}
The \textit{perturbed} output of the model for instance $i$ w.r.t feature $j$ 
and another instance $l \ne i$ is defined as:
\begin{equation}
\textcolor{Black}{f \left( \textcolor{Black}{\mathbf{x}_j^{(i, l)}} \right)} 
= f \left( 
x_1^{(i)}, x_2^{(i)}, \ldots, \textcolor{BrickRed}{x_j^{(l)}}, \ldots, 
x_D^{(i)} \right)
\end{equation}
where the value of feature $j$ is replaced by its corresponding value from 
instance $l$, as shown in Figure~\ref{fig:perturbmatrix}. Then, 
we can compute the change in loss between the 
perturbed and baseline losses as:
\begin{equation}
\Delta \mathcal{L}_j^{(i, l)} = \textcolor{Black}{\mathcal{L} \left[ y^{(i)}, 
f \left( \textcolor{Black}{\mathbf{x}_j^{(i, l)}} \right) \right]} - 
\textcolor{Black}{\mathcal{L} \left[ y^{(i)}, f \left( \mathbf{x}^{(i)} 
\right) \right]}.
\label{eqn:loss-change-matrix}
\end{equation}
Let $S_i \subset \{1, 2, \ldots M\} \backslash \{i\}$ be a subset of instances 
excluding~$i$. We replace the value of feature $j$ for instance $i$ with its 
value from each instance in $S_i$ and compute the average change in loss as:
\begin{equation}
\Delta \bar{\mathcal{L}_j}^{(i)} = \frac{1}{|S_i|} \sum \limits_{l \in S_i} 
\Delta 
\mathcal{L}_j^{(i, l)}.
\label{eqn:avg-loss-change-matrix}
\end{equation}
By averaging this calculation over all instances $i = 1 \ldots M$ using subsets 
$S_i$ selected by permuting the data set,
we compute the importance score of feature $j$ as:
\begin{equation}
I \left( f, j \right) = \frac{1}{M} \sum\limits_{i=1}^{M} \Delta 
\bar{\mathcal{L}_j}^{(i)}.
\label{eqn:importance-matrix}
\end{equation}

A model includes many
features, all of which may have some effect on the model's
output, but only some
of which are useful in predicting the target. We consider a feature to be 
important if it has a positive association 
with the target, in the sense that permuting the value of the feature
increases the model's loss on average. We use hypothesis testing to capture 
this notion of importance, as outlined in 
Subsection~\ref{sec:hypothesis-testing}.

\begin{figure}[h!]
	\centering
	\begin{subfigure}[c]{0.35 \linewidth}
		\includegraphics[width=\textwidth]{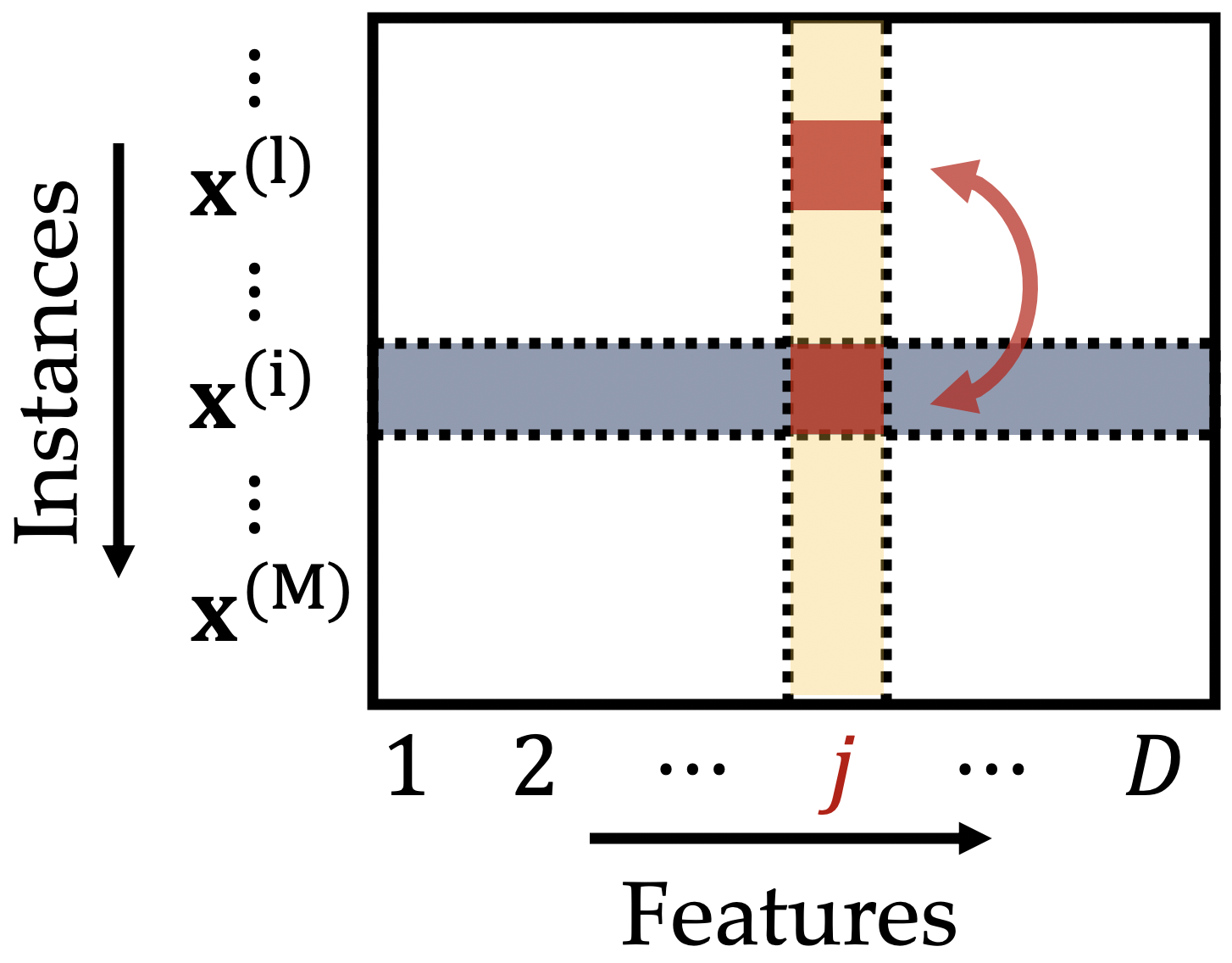}
		\caption{}
		\label{fig:perturbmatrix}
	\end{subfigure}
	\hfill
	\begin{subfigure}[c]{0.50 \linewidth}
		\includegraphics[width=\textwidth]{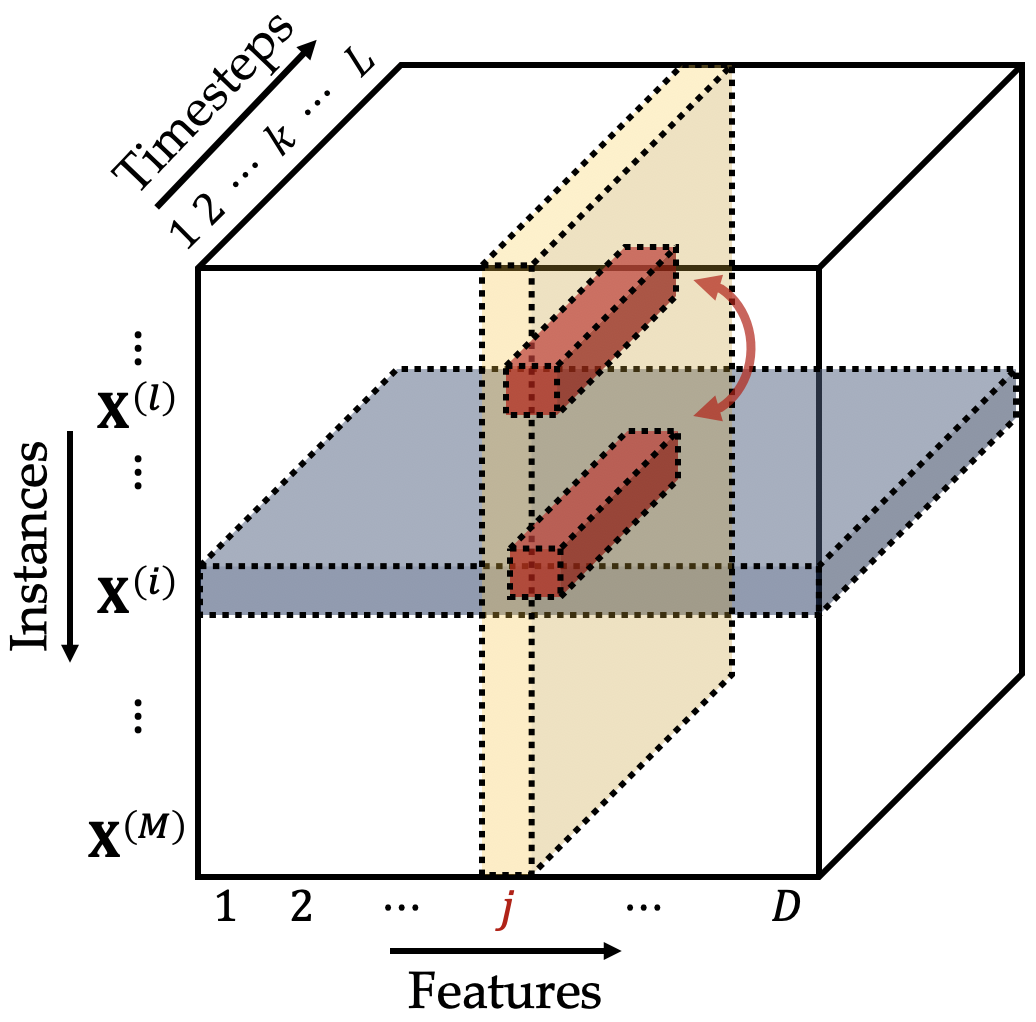}
		\caption{}
		\label{fig:perturbtensor}
	\end{subfigure} \\
	\begin{subfigure}[c]{0.50 \linewidth}
	\includegraphics[width=\textwidth]{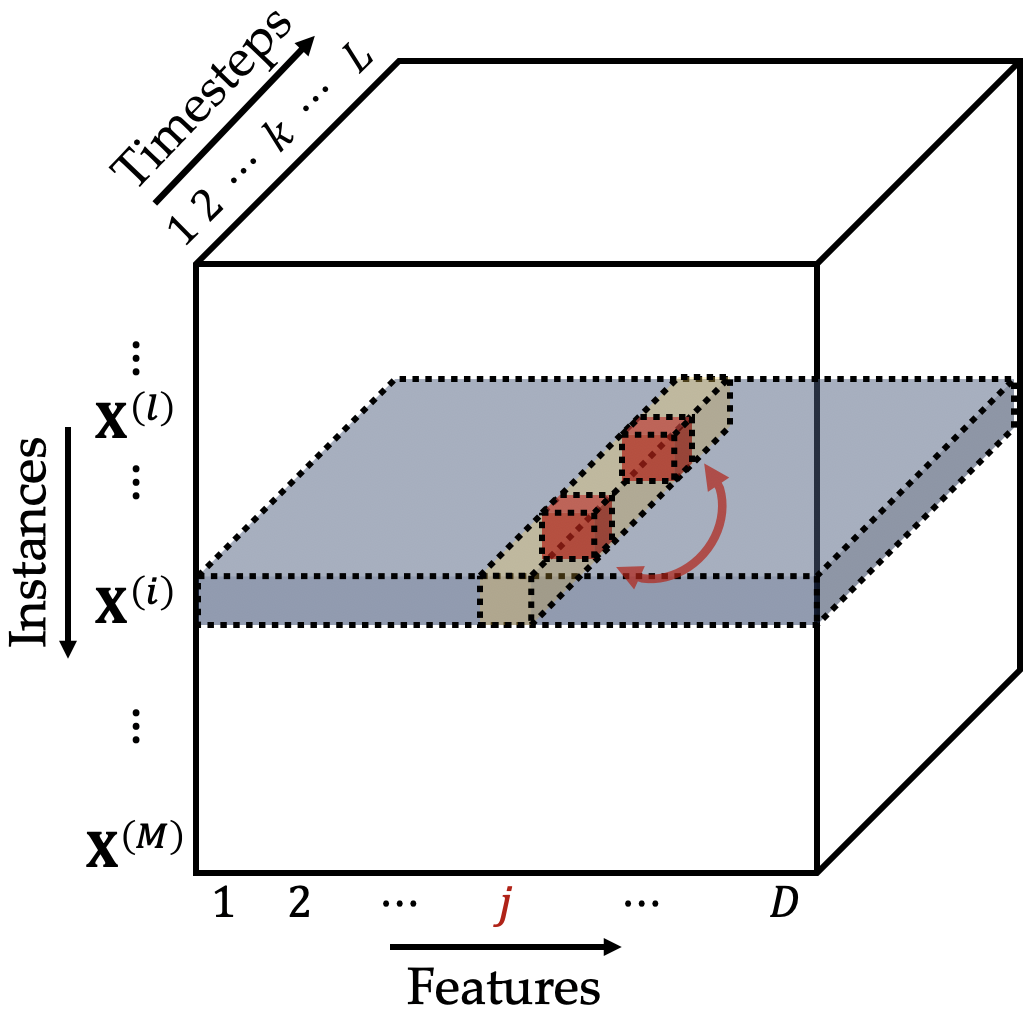}
	\caption{}
	\label{fig:perturbtime}
	\end{subfigure}
	\caption{Perturbation for instance $i$ and 
	feature $j$ to compute feature importance. (a) Data matrix showing 
	the replacement of the value of 
	feature $j$ in instance $i$ from instance $l$. (b) Data tensor showing the 
	replacement of a window of feature $j$ in instance $i$ from 
	the corresponding window of instance $l$ . (c) Time series 
	$\mathbf{x}_j^{(i)}$ 
	showing the exchange of feature values at two timesteps.}
	\label{fig:perturbation}
\end{figure}

\paragraph{Temporal models.}
We extend this idea to temporal models, where we assume that each feature is
represented by a time series of length $L$ and that the time series are aligned 
in time, so that the data is represented by 
an $M \times D \times L$ tensor, with instance $i$
represented by a matrix $\mathbf{X}^{(i)}$ and feature $j$ of instance $i$ by a 
time series $\mathbf{x}_j^{(i)} = \left< x_{j, 1}^{(i)}, x_{j, 2}^{(i)}, 
\ldots, x_{j, k}^{(i)}, \ldots x_{j, L}^{(i)} \right>$.

By unrolling in time, this may be viewed as tabular data consisting of $M$ 
instances and $D \cdot L$ features, so that permutations of individual 
features in the tabular setting correspond to permutations of 
individual timesteps in the temporal setting. However, doing so 
ignores the temporal structure of the data and correlations within time 
series. Thus, we consider joint permutations of timesteps in the temporal 
setting.

While joint permutations of subsets of features have been studied for tabular 
data~\cite{heneliusPeekBlackBox2014, 
gregoruttiGroupedVariableImportance2015}, the exponentially large number of 
subsets make the task intractable in general, and high-value subsets must be 
selected, such as through domain 
knowledge~\cite{leeUnderstandingLearnedModels2019}. Several removal-based 
explanation methods may be viewed as operating on high-value feature 
subsets~\cite{covertExplainingRemovingUnified2020}. In the temporal setting, 
the structure of the data lends itself to high-value subsets constituted by 
contiguous regions, i.e., windows, in time.

The perturbed output of the model for instance $i$ w.r.t. feature $j$ and time
window $[k_1, k_2]$ is given by:
\begin{equation}
\textcolor{Black}{f \left( \mathbf{X}_{j, [k_1, k_2]}^{(i, l)} \right)} = f 
\left( 
\mathbf{x}_1^{(i)}, 
\mathbf{x}_2^{(i)}, \ldots, \textcolor{BrickRed}{\mathbf{x}_{j, [k_1, 
k_2]}^{(i, l)}}, \ldots, \mathbf{x}_D^{(i)} \right)
\label{eqn:perturbed-model-tensor}
\end{equation}
where $\textcolor{BrickRed}{\mathbf{x}_{j, [k_1, k_2]}^{(i, l)}}$ is the time 
series for 
instance $i$ and feature $j$ with  timesteps in the window $[k_1, k_2]$ 
replaced by the corresponding window from another instance $l \ne i$, as 
shown in Figure~\ref{fig:perturbtensor}.
\begin{equation}
\textcolor{BrickRed}{\mathbf{x}_{j, [k_1, k_2]}^{(i, l)}} = \left< x_{j, 
1}^{(i)}, x_{j, 
2}^{(i)}, \ldots, \textcolor{BrickRed}{x_{j, k_1}^{(l)}, \ldots, x_{j, 
k_2}^{(l)}}, \ldots, x_{j, L}^{(i)} \right>.
\label{eqn:perturbed-feature-tensor}
\end{equation}
We compute the perturbed loss $\mathcal{L}\left[y^{(i)}, f \left( 
\mathbf{X}_{j, 
[k_1, k_2]}^{(i, l)} \right)\right]$, the change in loss 
(Equation~\ref{eqn:loss-change-matrix}), and average change in loss 
(Equation~\ref{eqn:avg-loss-change-matrix}) for instance $i$. We 
average this over all instances $i = 1 \ldots M$ to compute the 
importance score corresponding to the window $[k_1, k_2]$ for feature $j$:
\begin{equation}
I \left( f, j, [k_1, k_2] \right)
= \frac{1}{M} \sum\limits_{i=1}^{M} \Delta \bar{\mathcal{L}}_{j, [k_1, 
k_2]}^{(i)}.
\label{eqn:importance-tensor}
\end{equation}

By setting $k_1 = 1$ and $k_2 = L$, we permute the entire sequence of feature 
$j$ and get its overall importance
$I\left(f, j, [1, L] \right)$. Equation~\ref{eqn:importance-tensor}.

\subsection{Identifying Important Windows} \label{sec:important-window}
Given that the features have an explicit temporal structure, we want to 
localize the timesteps that the model may be focusing on. We assume that for a 
given feature $j$, there exists an underlying contiguous
time window $W^* = [k_1, k_2]: 1 \le k_1 < k_2 \le L$, 
so that most of the effect of perturbing $j$ lies in $W^*$.
Specifically, we consider a partitioning of the sequence into three 
windows: prior window $W_P = [1, k_1 - 1]$, important window $W^* = [k_1, 
k_2]$, and subsequent window
$W_S = [k_2 + 1, L]$ where $W_P$ are 
$W_S$ both have low importance and a size of zero or more timesteps.
In order to pin 
down the most salient timesteps, we want to find the 
smallest window $W^*$ such that
$I \left(f, j, 
W_P \right) < \left( \frac{1 - \gamma}{2} \right) I\left(f, j, 
[1, L] \right)$
and
$I \left(f, j, 
W_S \right) < \left( \frac{1 - \gamma}{2} \right) I\left(f, j, 
[1, L] \right)$, 
where the \textit{window localization parameter} $\gamma:\;0 < \gamma < 1$ 
controls the degree to which the model focuses on the 
window $W^*$.

We use a binary search algorithm to identify the largest possible prior window 
$W_P$ and subsequent window $W_S$, and by exclusion, identify the important 
window $W^*$. We start with an initial estimate $\hat{W}_P = [1, 
\hat{k}_1]$ with $\hat{k}_1 = \frac{L}{2}$. We then perturb 
$\hat{W}_P$ and observe its importance score $I \left(f, j, \hat{W}_P \right)$.

If $\hat{W}_P$ contains important timesteps, its 
importance score is likely to be 
inflated due to the breakage of correlations between all timesteps of the 
important window, 
i.e., predictors strongly associated with the 
response~\cite{nicodemusBehaviourRandomForest2010}, leading the search 
algorithm to contract $\hat{W}_P$ to exclude these 
timesteps. On the other hand, if $\hat{W}_P$ has a low importance score $I 
\left(f, j, 
\hat{W}_P \right) < \left( \frac{1 - \gamma}{2} \right) I\left(f, j, 
[1, L] \right)$, we expand it unless doing so would violate this condition.

We expand or contract $\hat{W}_P$ by updating 
$\hat{k}_1$ and repeat the perturbation until we find the 
largest $\hat{W}_P$ that satisfies $I \left(f, j, 
\hat{W}_P \right) < \left( \frac{1 - \gamma}{2} \right) I\left(f, j, 
[1, L] \right)$, and set $k_1 = |\hat{W}_P| + 1$.

We follow a similar procedure to identify $k_2$, starting from an initial 
estimate $\hat{W}_S = [\hat{k}_2, L]$ with $\hat{k}_2 = k_1 + 1$, measuring its 
importance score, and iteratively expanding or contracting it under the 
constraint 
$\hat{k}_2 > k_1$, until we identify the largest $\hat{W}_S$ that satisfies $I 
\left(f, j, \hat{W}_S \right) < 
\left( \frac{1 - \gamma}{2} \right) 
I\left(f, j, [1, L]\right)$. We select the final boundary estimates 
$k_1$ and $k_2 = L - |\hat{W}_S|$ to characterize the important window $W^*$ 
and compute its importance score.

\subsection{Determining the Importance of Feature Ordering}
\label{sec:ordering-importance}
Permutations of time series ordering have been used to detect 
circadian patterns in gene expression 
data~\cite{storchExtensiveDivergentCircadian2002, 
ptitsynCircadianClocksAre2006}.
To determine the importance of the ordering of a feature $j$ within a 
given window $[k_1, k_2]$, we permute its values within the window, as 
illustrated in Figure~\ref{fig:perturbtime}, and average across instances.
Let $\Pi_{[k_1, k_2]}$ be a 
permutation over timesteps within the window: $\Pi_{[k_1, k_2]} = 
\mathrm{Permute}\left[ \left< k_1, k_1 + 1, \ldots, k_2 \right> \right] = 
\left< {\pi}_{k_1}, {\pi}_{k_1 + 1}, \ldots, {\pi}_{k_2} \right>$. The 
perturbed model output is given by:
\begin{equation}
\textcolor{Black}{f \left( \mathbf{X}^{(i)}_{j, \Pi_{[k_1, k_2]}} \right)} = f 
\left( \mathbf{x}_1^{(i)}, \mathbf{x}_2^{(i)}, \ldots, 
\textcolor{BrickRed}{\mathbf{x}_{j, \Pi_{[k_1, k_2]}}^{(i)}}, \ldots, 
\mathbf{x}_D^{(i)} \right)
\end{equation}
where the permuted time series for instance $i$ and 
feature $j$ is given by:
\begin{equation}
\begin{split}
\textcolor{BrickRed}{\mathbf{x}_{j, \Pi_{[k_1, k_2]}}^{(i)}} =  
\langle &x_{j, 1}^{(i)}, \ldots, x_{j, k_1 - 1}^{(i)},
\textcolor{BrickRed}{x_{j, {\pi}_{k_1}}^{(i)}, \ldots, 
x_{j, {\pi}_{k_2}}^{(i)}}, \\
&x_{j, k_2 + 1}^{(i)}, \ldots x_{j, L}^{(i)} \rangle
\end{split}
\label{eqn:ordering}
\end{equation}

We compute the change in loss between the perturbed and baseline loss 
(Equation~\ref{eqn:loss-change-matrix}) for multiple permutations $\Pi_{[k_1, 
k_2]}$, and use hypothesis testing to determine the importance of the ordering 
of $j$.

\subsection{Hypothesis Testing and False Discovery Rate Control}
\label{sec:hypothesis-testing}
Several works have leveraged hypothesis
testing in conjunction with permutations to ascertain feature importance
\cite{ojalaPermutationTestsStudying2010,
	mentchQuantifyingUncertaintyRandom2016, mentchFormalHypothesisTests2017,
	leeUnderstandingLearnedModels2019, burnsInterpretingBlackBox2020}.
We use hypothesis testing to evaluate feature importance, while using the
scores from Equation~\ref{eqn:importance-matrix} to quantify the degree of
importance. Intuitively, if a feature is 
useful to the model for predicting the target, then permuting 
its values across instances should increase the model's loss on average.

Specifically, we use the formulation of permutation tests 
in~\citet{ojalaPermutationTestsStudying2010} to evaluate the effect of 
permuting a feature and ascertain its importance. We use mean loss as the test 
statistic, so that the empirical \textit{p}-value for testing the significance 
of 
permuting feature $j$ is given by:
\begin{equation}
\hat{p} = \frac{|\left\{ \Pi \in \mathcal{P}_j : \bar{\mathcal{L}}_{\Pi} \le 
\bar{\mathcal{L}} \right\}| + 1}{|\mathcal{P}_j| + 1}
\label{eqn:empirical-pvalue}
\end{equation}
where $\mathcal{P}_j$ is a set of permutations of the original data with 
feature $j$ 
permuted in some way, $\bar{\mathcal{L}}$ is the mean loss for the original 
data, and 
$\bar{\mathcal{L}}_{\Pi}$ is the mean loss for permuted data  
$\Pi$.

Depending on the choice of $\mathcal{P}_j$, we can use 
Equation~\ref{eqn:empirical-pvalue}
to test 
the overall importance, window importance, and ordering importance for a 
feature $j$. These tests may be organized as a hierarchy, as 
shown in Figure~\ref{fig:hierarchy-tests}, so that a test is performed only if 
its parent test, if any, returns a significant \textit{p}-value.

The multiplicity of hypothesis tests for a given feature and across the set of 
features leads to a multiple comparisons problem. We address this by using a
hierarchical false discovery rate (FDR) control methodology
\cite{yekutieliHierarchicalFalseDiscovery2008}, with the FDR for tests with 
the same parent controlled using the Benjamini-Hochberg 
procedure~(\citeyear{benjaminiControllingFalseDiscovery1995}). This approach 
also 
readily extends to the case when features may be arranged in a hierarchy in 
order to interpret models in terms of feature 
groups~\cite{heneliusPeekBlackBox2014, 
gregoruttiGroupedVariableImportance2015, 
leeUnderstandingLearnedModels2019}, as shown in 
Figure~\ref{fig:hierarchy-features}.

\begin{figure}[h!]
	\centering
	\begin{subfigure}{\linewidth}
			\includegraphics[width=\textwidth]{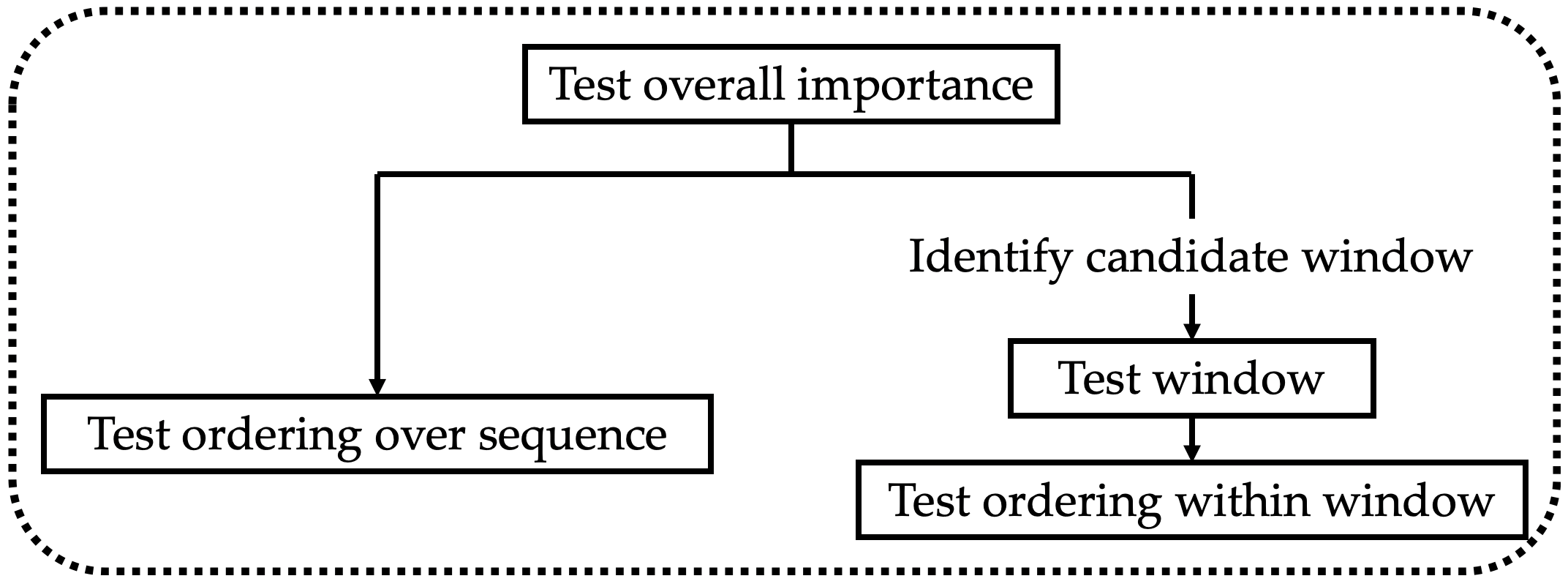}
		\caption{}
		\label{fig:hierarchy-tests}
	\end{subfigure} \\
	\begin{subfigure}{\linewidth}
	\centering
	\includegraphics[width=0.7\textwidth]{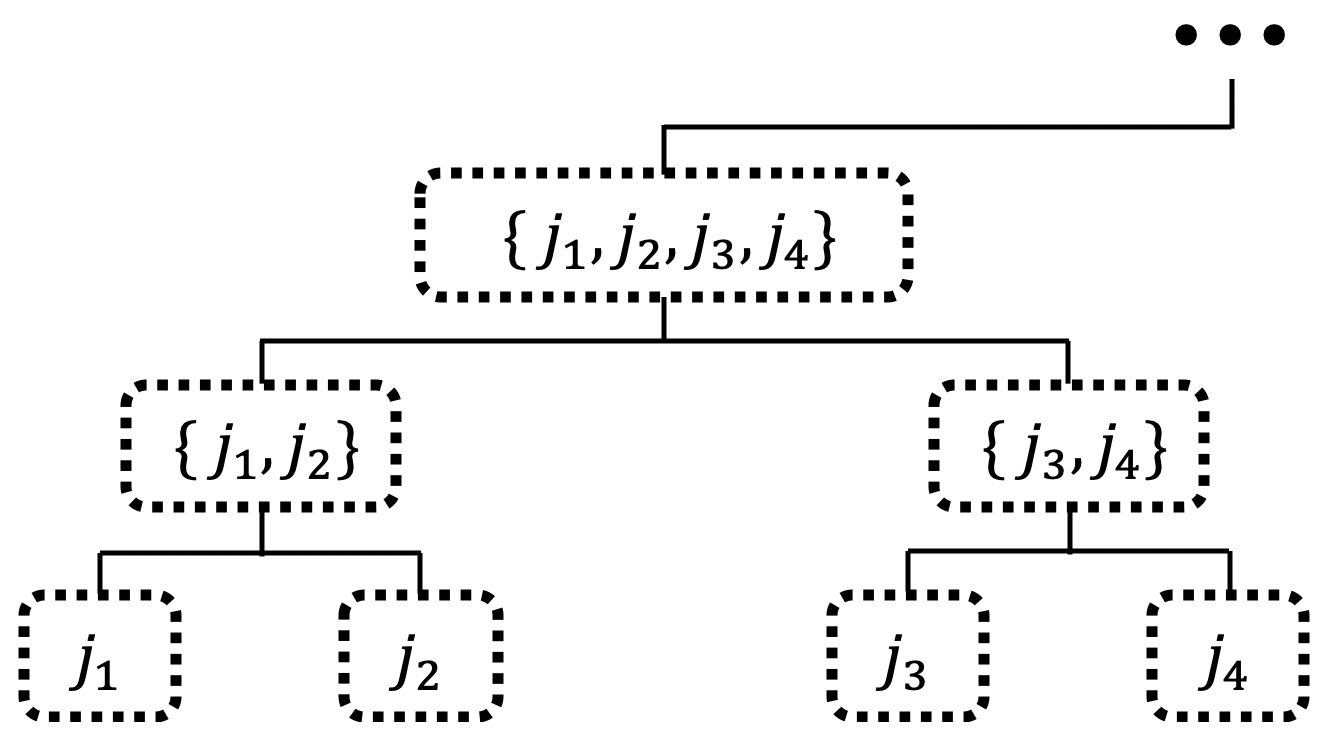}
	\caption{}
	\label{fig:hierarchy-features}
	\end{subfigure}
	\caption{(a) A hierarchy of tests used to check for a given feature its (i) 
	overall importance, (ii) important window and (iii) the importance of 
		ordering within the window. (b) A hierarchy over the 
		features, where each node is tested using the testing hierarchy shown 
		in (a). Feature groups are tested via joint permutations of their 
		constituent features. Hierarchical FDR control is used for multiple 
		testing correction, and subtrees rooted at nodes with 
		\textit{p}-values above a threshold are pruned.}
\end{figure}
\section{Results}

We evaluate TIME in two ways: by analyzing synthetic data sets and 
models where the ground truth pertaining to relevant features and their 
temporal 
properties is known, and by analyzing a long short 
term memory (LSTM) model trained to 
predict in-hospital mortality from intensive care unit (ICU) data.

\subsection{Simulations}

To evaluate our approach, we create synthetic time-series data where we 
control the generating processes for different features and the functions used 
to generate targets for the instances. We also create synthetic models that 
approximate these functions and serve as the models to be analyzed. 
We control the features that are relevant to the models, as well 
as the temporal properties of the models, including the relevant window and 
dependence on ordering for each feature. We then analyze these models using 
TIME and evaluate the results in terms of power (the fraction of relevant 
features correctly identified) and FDR (the fraction of features 
estimated to be important, but not truly relevant in the underlying 
function).

\paragraph{Synthetic data and models.}
\label{subsubec:synthetic-data-models}
We use random Markov chains to generate time series 
data with both categorical and continuous features, and randomly 
selected windows for each feature. More details of the synthetic data 
generation model are given in Section~\ref{section:supp-synthetic-data}.
For each synthetic data set, we create a 
multi-level function to generate targets for the instances. For each feature 
$j$,
we apply a \textit{feature function} $g_j$ that aggregates the values within 
the window 
$[k_1, k_2]$ of that feature:
\begin{equation}
	g_j \left( \mathbf{x}_j \right) =  \hat{g}_j \circ 
	\tilde{g}_j \circ \bar{g}_j \left( x_{j, k_1} \ldots x_{j, k_2} \right)
	\label{eqn:feature-function}
\end{equation}

where (i) $\bar{g}_j$ is aggregation operator, 
randomly selected from one of \texttt{max}, 
\texttt{average} (both insensitive to temporal ordering), 
\texttt{monotonic-weighted-average} and 
\texttt{random-weighted-average} (both sensitive to temporal ordering) 
functions, (ii) $\tilde{g}_j$ is 
randomly selected from one of \texttt{identity}, \texttt{absolute-value} 
and \texttt{square} functions and serves to potentially induce
interactions between the timesteps in the window, and (iii) $\hat{g}_j$ is a 
standardization operator that yields zero mean and unit variance for feature 
$j$ across the data set.

We designate a 
subset of all features as relevant and
take a linear combination of their feature functions to generate the target:
\begin{equation}
y = \sum\limits_{j \in \mathcal{R}} \alpha_j g_j \left( \mathbf{x}_j \right)
\label{eqn:ground-truth-model}
\end{equation}
where $\mathcal{R}$ is the set of relevant features, and coefficients 
$\alpha_j$ are sampled uniformly at random between -1 and 1.
This serves to generate responses for a regression task. To emulate a 
classification task, we choose a threshold such that half the 
instances are labeled negative and the other half are labeled positive.

The synthetic model represents an approximation of this function and is 
generated by adding a weighted linear combination of the set of irrelevant 
features $\mathcal{R'}$ to the function:
\begin{equation}
f \left( \mathbf{X} \right) = \sum\limits_{j \in \mathcal{R}} \alpha_j
g_j \left( \mathbf{x}_j \right) + \beta \left[ \sum\limits_{j' \in 
\mathcal{R'}} 
\alpha_{j'} g_{j'} \left( \mathbf{x}_{j'} \right) \right].
\label{eqn:synth-model}
\end{equation}

The terms corresponding to the irrelevant features represent
noise in the model, with the overall level of noise controlled by the 
multiplier $\beta$. The 
rationale behind the approximation is to have a realistic 
model that does not perfectly match the underlying function and whose output 
changes in a small way when irrelevant features are perturbed, but not in a way 
that consistently affects the loss function $\mathcal{L}$.

Unlike a real model where training may involve optimizing over a loss 
function, here we use a loss function only to measure the fidelity of the model 
output $f$ to the target $y$ and compute the importance of each feature. 
We use quadratic loss for regression models and 
binary cross-entropy for classification models.

\paragraph{Baseline comparisons.}
\begin{figure*}[t]
	\centering
	\begin{subfigure}[c]{0.32 \linewidth}
		\includegraphics[width=\textwidth]{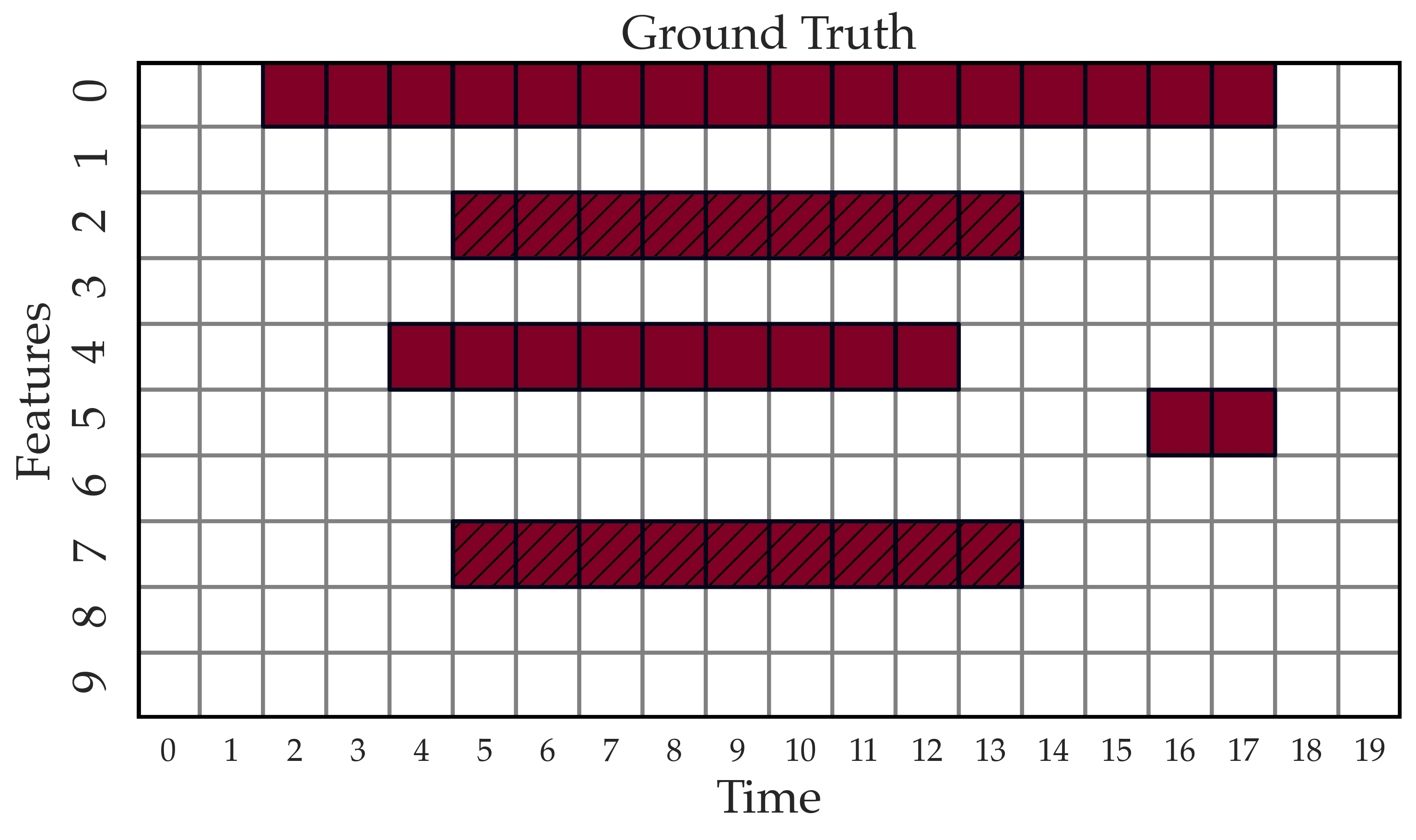}
		\caption{}
		\label{fig:baseline-ground-truth}
	\end{subfigure}
	\hfill
	\begin{subfigure}[c]{0.32 \linewidth}
		\includegraphics[width=\textwidth]{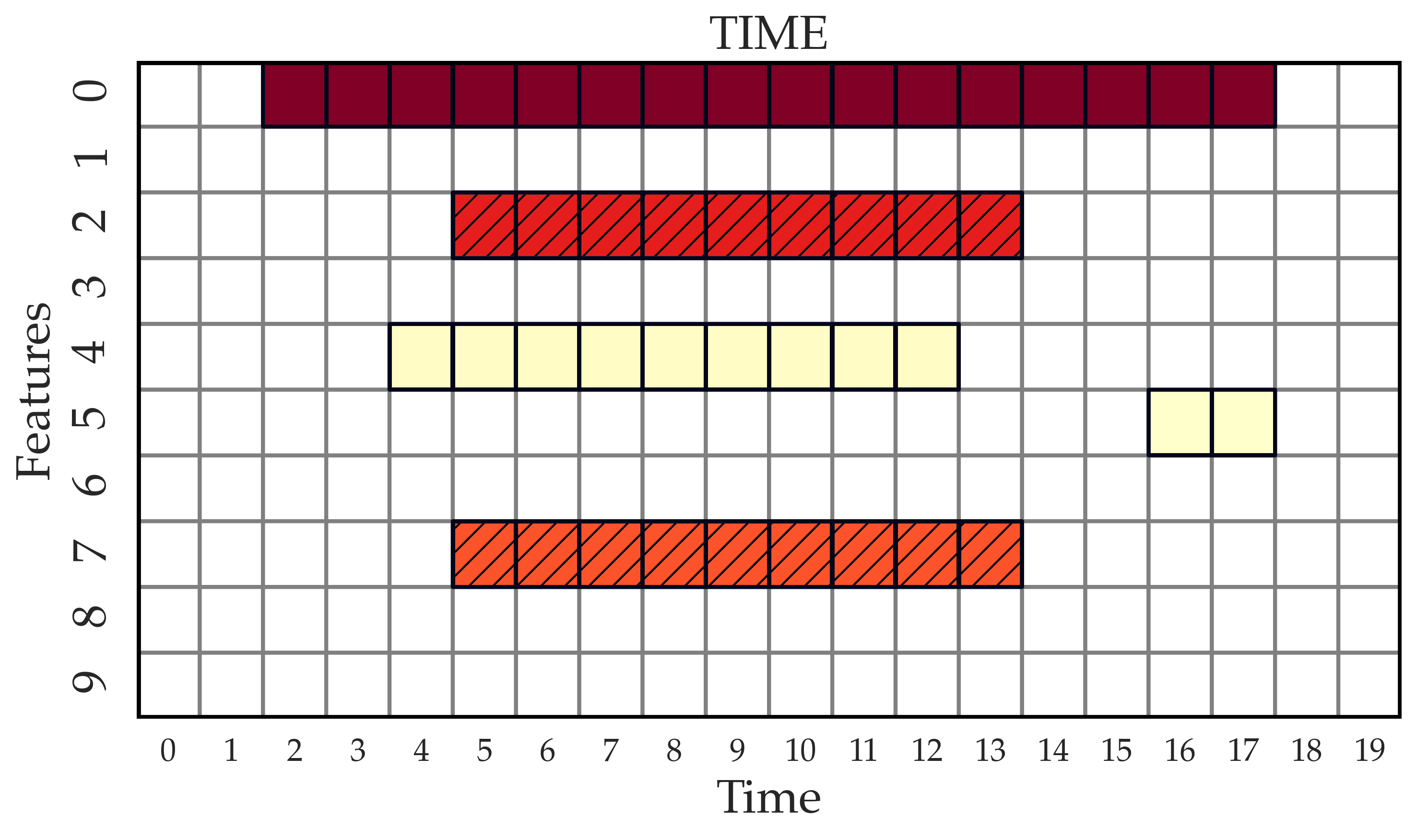}
		\caption{}
		\label{fig:baseline-our-method}
	\end{subfigure}
	\hfill
	\begin{subfigure}[c]{0.32 \linewidth}
		\includegraphics[width=\textwidth]{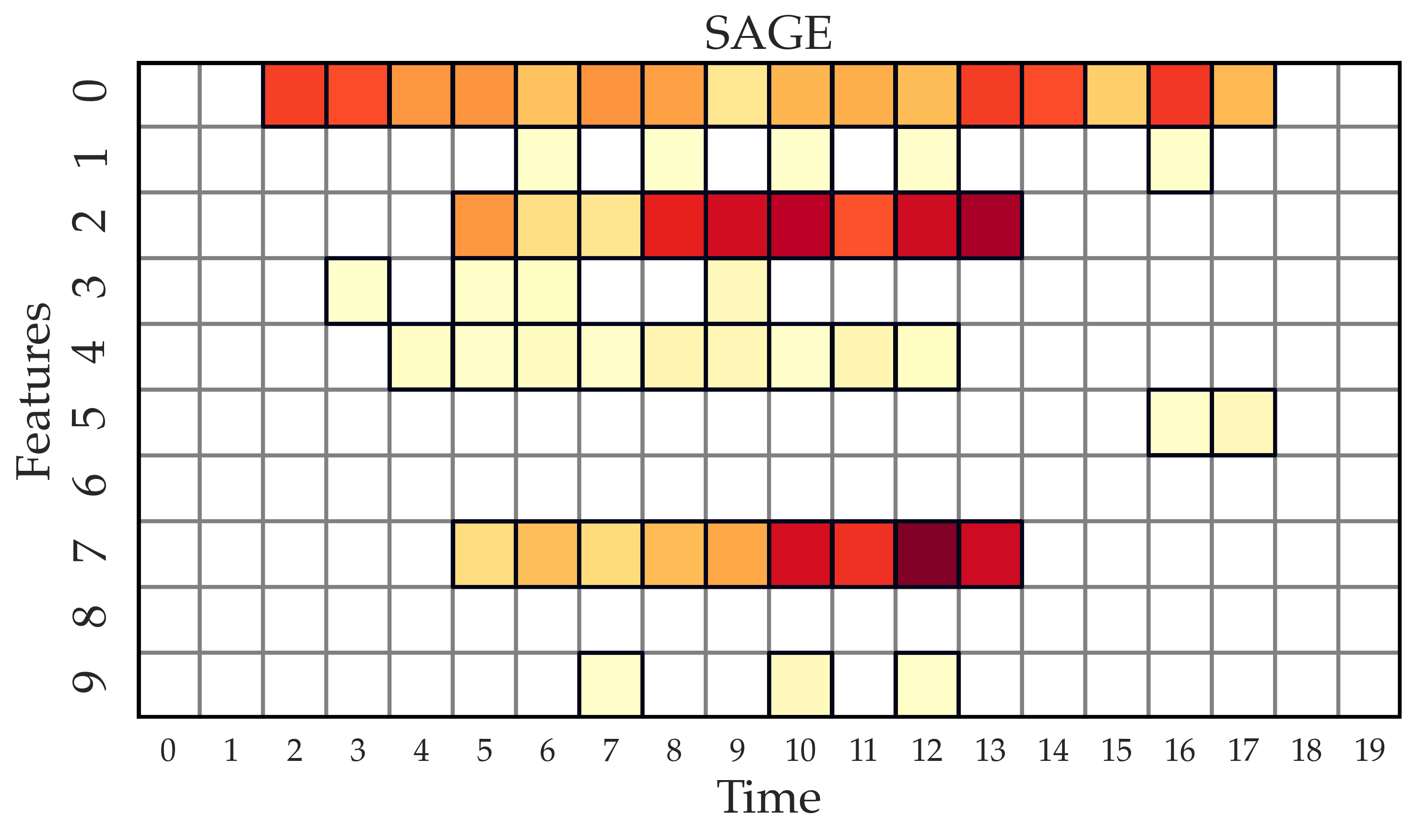}
		\caption{}
		\label{fig:baseline-sage}
	\end{subfigure}
	\caption{(a) Heat map showing relevant features and windows for the ground 
	truth model in color, with hatched textures used to indicate features 
	sensitive to 
	ordering. (b) Heat map showing importance scores for TIME, indicating 
	important features, windows and ordering. (c) Heat map showing importance 
	scores for SAGE. Darker shades indicate higher 
	importance scores for the explanation methods.}
	\label{fig:heatmaps}
\end{figure*}
We compare TIME against three baseline methods:

LIME~\cite{ribeiroWhyShouldTrust2016}: a model-agnostic 
method for local explanations. We aggregate local feature importance scores to 
generate global ones, based on the \textit{submodular pick} algorithm described 
by the authors.
We include LIME due to its widespread usage as an explanation method, and as a 
representative of other methods that focus on the model 
output rather than loss and generate local explanations.

SAGE~\cite{covertUnderstandingGlobalFeature2020}: a model-agnostic 
method that generalizes SHAP~\cite{lundbergUnifiedApproachInterpreting2017} to 
global explanations. SAGE is intractable to compute exactly, so we use two 
approximations provided by the authors: sampling held-out features from 
(i) their marginal distributions, or (ii) from 
reference values (for which we use mean values); we refer to these as SAGE 
and SAGE-m respectively.

PERM: a method using traditional permutation tests on individual, i.e., tabular 
features, 
rather than sequences. This is also a global 
feature importance method, although it does not take into account the temporal 
structure of the data.

Since the baseline methods are designed for a tabular feature representation, 
we unroll the temporal data comprising $D$ features and $L$ timesteps into 
tabular data with $D \cdot L$ features.
For the sake of brevity, and to avoid confusion with temporal features, we 
refer 
to tabular features simply as `timesteps' in the context of evaluation, since 
each tabular feature corresponds to a single 
feature-timestep pair in the original representation.

For TIME, we set the window localization 
parameter $\gamma$ to 0.99 and control the false discovery 
rate at 0.1. We sample 50 permutations to compute each \textit{p}-value.

We generate data sets with 1,000 instances, 10 features and 20 timesteps per 
feature. Five features are randomly selected as relevant. We create a synthetic 
model for each data set, with $\beta$ tuned to yield 
a 90\% accuracy (for 
classification models) or an $R^2$ value of 0.9 (for regression models).
We evaluate the methods by examining power and FDR for identifying relevant 
features as well as timesteps, and average the results over 100 data sets and 
models.

\begin{table}[h]
	\centering
	\caption{Comparison between different explanation methods, indicating 
		average power and FDR for detecting relevant features and timesteps, as 
		well the median runtime for each method.}
	\label{tab:baseline-comparison}
	\resizebox{\linewidth}{!}{%
		\begin{tabular}{|l|ll|ll|r|}
			\hline
			\multicolumn{1}{|c|}{} & 
			\multicolumn{2}{c|}{\begin{tabular}[c]{@{}c@{}} 
					Features\end{tabular}} & 
			\multicolumn{2}{c|}{\begin{tabular}[c]{@{}c@{}} 
					Timesteps\end{tabular}} & \multicolumn{1}{c|}{} \\ 
					\cline{2-5}
			\multicolumn{1}{|c|}{\multirow{-2}{*}{Method}} & 
			\multicolumn{1}{c|}{\begin{tabular}[c]{@{}c@{}} 
					Power\end{tabular}} & 
			\multicolumn{1}{c|}{\begin{tabular}[c]{@{}c@{}} 
					FDR\end{tabular}} & 
			\multicolumn{1}{c|}{\begin{tabular}[c]{@{}c@{}} 
					Power\end{tabular}} & 
			\multicolumn{1}{c|}{\begin{tabular}[c]{@{}c@{}} 
					FDR\end{tabular}} & 
			\multicolumn{1}{c|}{\multirow{-2}{*}{\begin{tabular}[c]{@{}c@{}}
						Runtime\\ (seconds)\end{tabular}}} \\ \hline
			TIME & \textbf{0.930} & 0.037 & \textbf{0.923} & 
			0.054 & 371 \\
			TIME-n & 0.922 & \textbf{0.018} & 0.915 & 
			\textbf{0.021} & 371 \\
			LIME & 0.710 & 0.290 & 0.692 & 0.308 & 682 \\
			SAGE & 0.806 & 0.194 & 0.786 & 0.214 & 15318 \\
			SAGE-m & 0.758 & 0.242 & 0.731 & 0.269 & \textbf{124} \\
			PERM & 0.836 & 0.164 & 0.818 & 0.182 & 1478 \\ \hline
		\end{tabular}%
	}
\end{table}

For the baseline methods, we estimate a feature's 
importance by averaging non-zero importance scores across the timesteps 
belonging to feature.
We sort timesteps in decreasing order of importance scores
and report the $n$ features or timesteps with the highest scores, where $n$ is 
determined by the ground truth 
relevant features and timesteps.
Since our method identifies specific features 
and windows as important, we evaluate it based 
on (i) all the features and timesteps it identifies as 
important, and (ii) up to $n$ timesteps with the 
highest non-zero scores, as we do with the other baselines. We refer to 
these as TIME and TIME-n respectively.

Table~\ref{tab:baseline-comparison} shows results from this comparison, 
averaged across 100 data sets and classification models. Both TIME and TIME-n 
provide higher 
power and lower FDR than all baselines for both features 
and timesteps, and the average FDR is well-controlled at the 0.1 level.

Figure~\ref{fig:heatmaps} illustrates feature importance explanations for one 
such model. It shows a set of heat maps indicating 
relevant timesteps
for the ground truth model along with the importance scores returned by 
TIME and SAGE. For the ground truth model, boxes corresponding to 
relevant timesteps are shown in a uniform color.
For the explanation 
methods, colored boxes indicate non-zero importance scores, with higher scores 
shown in darker shades. Hatched textures are used to show features for which 
ordering is 
relevant (ground truth) or identified as important (TIME), but they are not 
shown for SAGE since it is not able to detect the significance of ordering.
TIME assigns importance scores to windows for each feature, while SAGE 
(as well as other baseline methods) assign importance scores to each timestep, 
since 
they operate on a tabular representation. For this model, TIME identifies 
all the relevant features, timesteps and their ordering correctly. SAGE assigns 
non-zero 
importance scores to all the relevant timesteps, but in some cases, irrelevant 
timesteps are ranked above relevant ones, adversely affecting its power and FDR 
for detecting important features.

\paragraph{Performance vs. test set size.}
In addition to baseline comparisons, we examine the performance of our method 
as a function of the size of the test set used to analyze the model. We 
generate data sets with 6,400 instances, 30 features and 50 timesteps per 
feature, and increase the size of the test set available to the model in 
multiples of two. Ten features are randomly selected as relevant.
For each test set size, we aggregate the results 
over 100 different models.

\begin{figure}[t]
	\centering
	\begin{subfigure}[c]{\linewidth}
		\includegraphics[width=\textwidth]{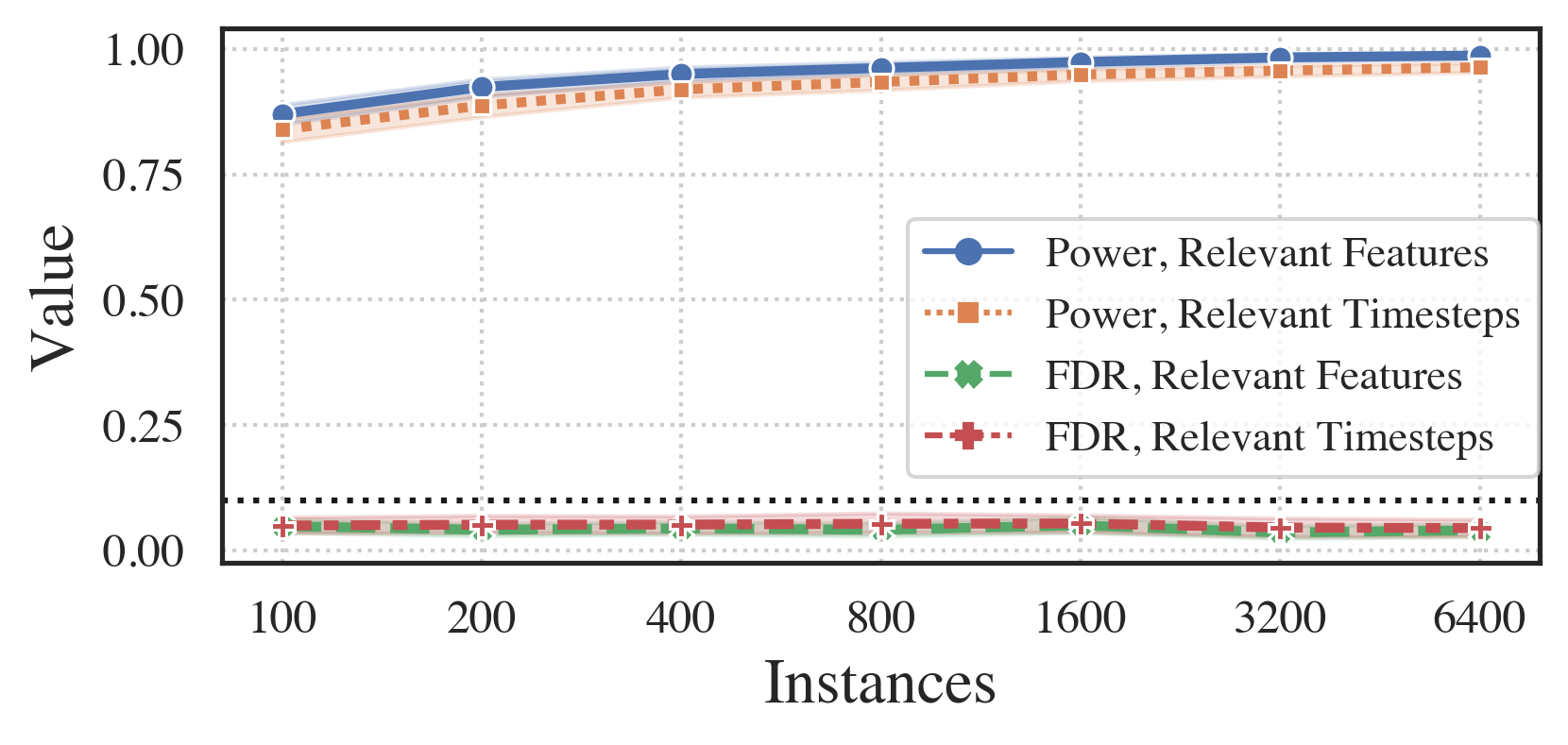}
		\caption{}
		\label{fig:sims-timesteps}
	\end{subfigure} \\
	\begin{subfigure}[c]{\linewidth}
		\includegraphics[width=\textwidth]{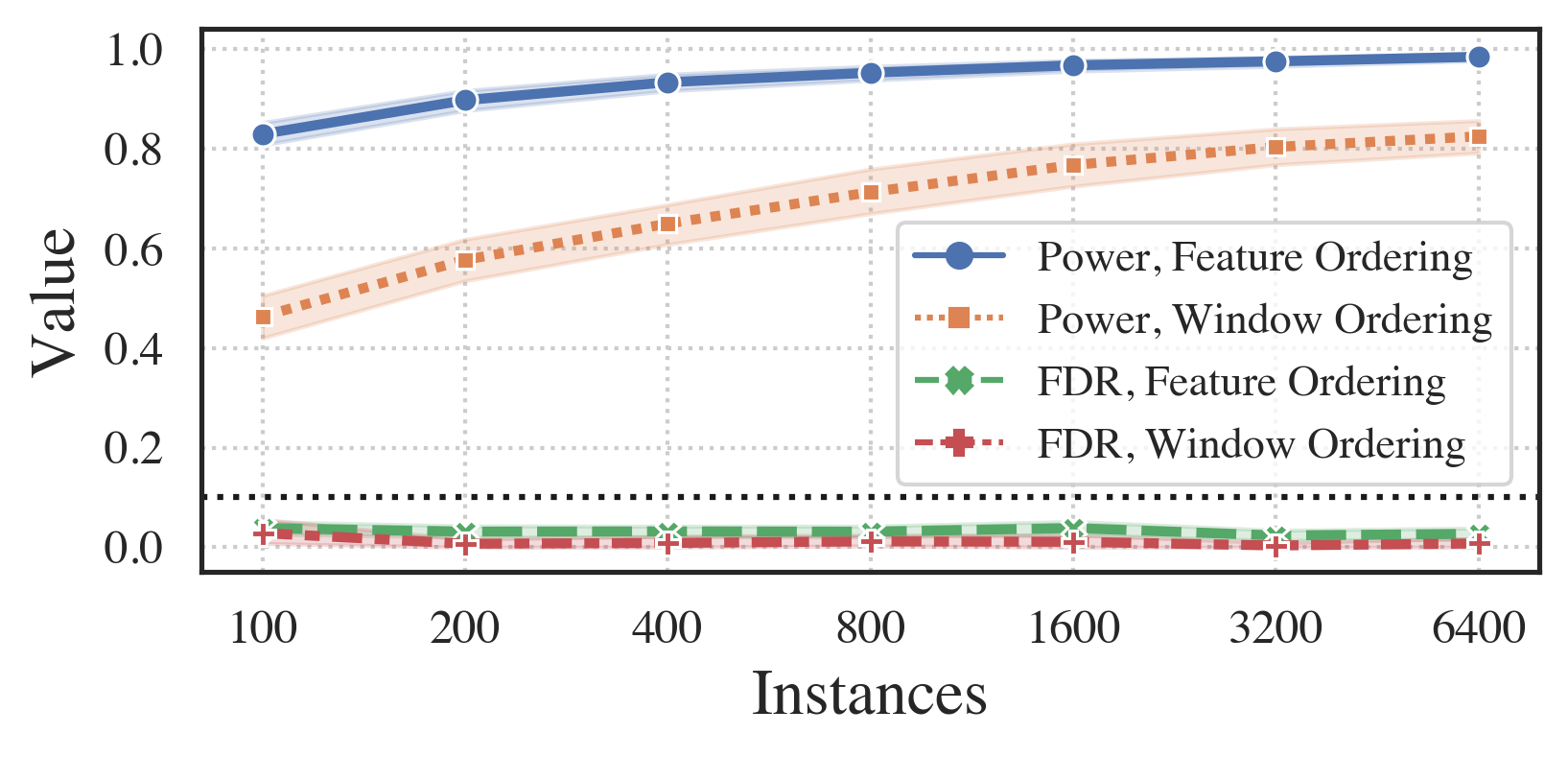}
		\caption{}
		\label{fig:sims-ordering}
	\end{subfigure}
	\caption{(a) Average power and FDR for detecting relevant features and 
		timesteps as a function of test set size. (b) Average power and FDR for 
		detecting ordering relevance for features and windows as a function of 
		test 
		set size. The bands represent 95\% confidence intervals, and the 
		dotted horizontal line represents the 0.1 level at which FDR is 
		controlled.}
	\label{fig:sims-regression}
\end{figure}

Figure~\ref{fig:sims-regression} shows the results of 
this analysis for regression models, and similar results are obtained for 
classification models. Figure~\ref{fig:sims-timesteps} shows average power and 
FDR for relevant features and timesteps as a function of test set size. The 
power increases as the test set size increases and has high terminal values, 
indicating that our approach is successful at identifying most of the relevant 
features and windows. The average FDRs are well-controlled at the 
0.1 level.

Figure~\ref{fig:sims-ordering} shows average power and FDR for 
detecting features and windows for which the ordering of values is important. 
Feature 
ordering refers to the ordering of a feature's values across its entire 
sequence. Since the distribution of values inside the window is different from 
that outside the window, the model is sensitive to the ordering of all features 
having windows smaller than the sequence length. However, the model is 
sensitive to the ordering of values within the window only for certain feature 
functions. At the largest test set size
TIME is able to detect ordering with high accuracy while FDRs 
are well-controlled at the 0.1 level.
We detect window ordering at lower power compared to feature ordering
due to the greater difficulty of the task,
and the fact that relevant features that are 
not identified as important are not assessed for important windows or their 
ordering.

\begin{figure*}[t]
	\centering
	\includegraphics[width=0.8\linewidth]{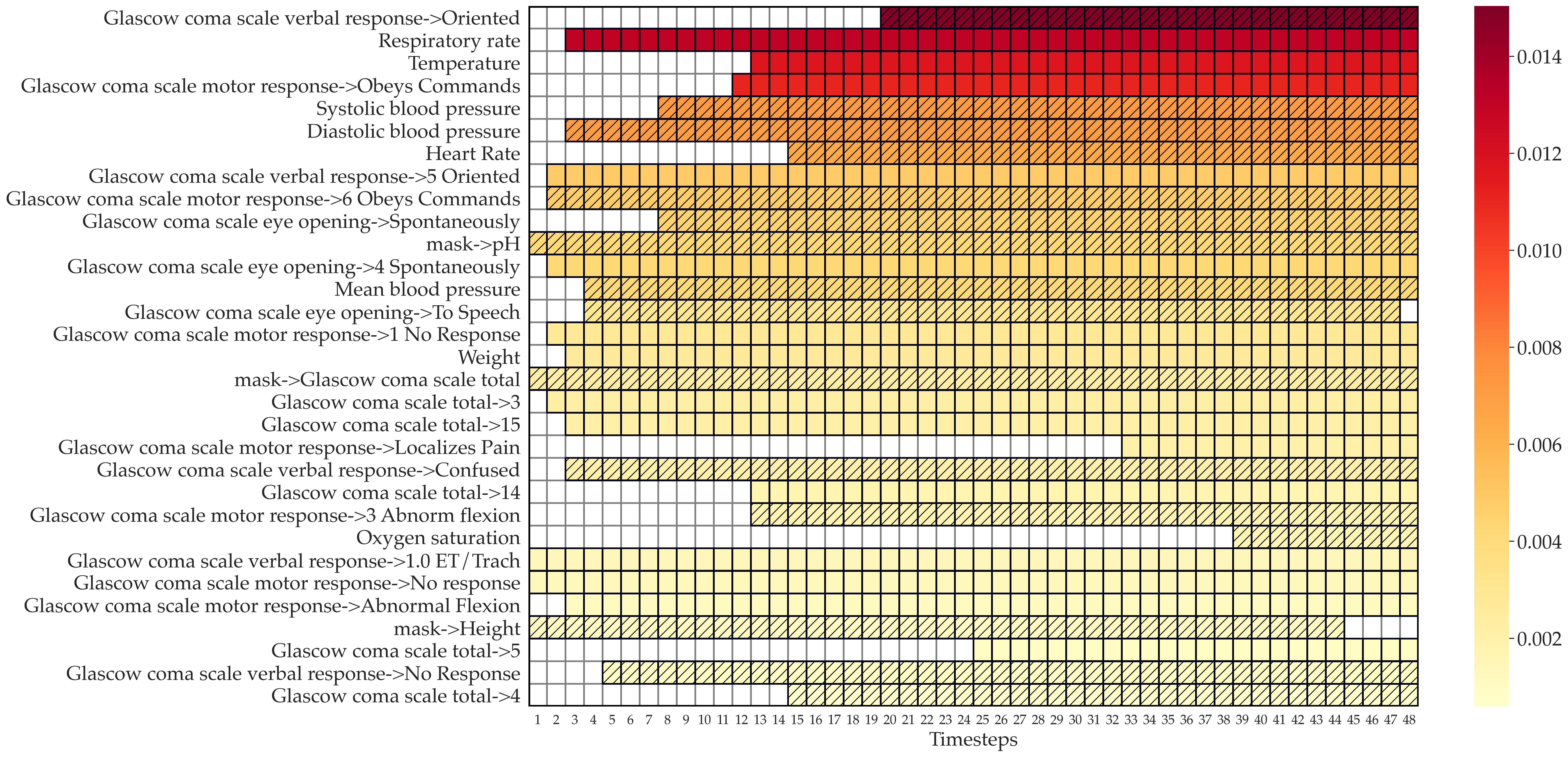}
	\caption{Heat map showing the analysis of a MIMIC-III LSTM model trained 
		to predict in-hospital mortality. Out of a
		total of 76 features, 31 were identified as important and are 
		shown in decreasing order of their importance score.
		Each row corresponds to a single feature 
		and shows the window corresponding to important timesteps in color. The 
		importance score is indicated by the color bar, with higher scores 
		associated with darker shades, and hatched textures show windows that 
		were found to be significant in relation to ordering.}
	\label{fig:mimic}
\end{figure*}

\subsection{MIMIC-III Benchmark LSTM Model}

To consider a challenging, real-world task, we analyze an LSTM model trained on 
the MIMIC-III database, a 
publicly available critical care database consisting of records of 58,976 
intensive care unit (ICU) 
admissions~\cite{johnsonMIMICIIIFreelyAccessible2016}. The model is one of 
several proposed as part of a benchmark suite for four different clinical 
prediction tasks over the MIMIC-III database 
\cite{harutyunyanMultitaskLearningBenchmarking2017}. In this analysis, we 
examine a regular LSTM~\cite{hochreiterLongShortTermMemory1997} trained to 
predict in-hospital mortality of patients given the first 48 hours of their
ICU stay observations. After pruning to meet inclusion criteria (such as stays 
of 48 hours or longer), the data comprises training, validation and test sets 
of 14,682, 3,221 and 3,236 stays respectively, with 13.23\% of the labels being 
positive. There are 76 features, each 
represented by a sequence of length 48. The features are derived from chart and 
laboratory measurements, and include `mask' features indicating interpolated 
values. Further details on the model and features may be found in the 
benchmarking paper \cite{harutyunyanMultitaskLearningBenchmarking2017}.

We use the validation set to analyze the LSTM and 
identify important features and windows, and whether or not their ordering is 
important to the model.
We set the window localization parameter $\gamma$ as 0.95 and control the false 
discovery rate at 0.1. We sample 200 permutations to compute each 
\textit{p}-value.
Figure~\ref{fig:mimic} shows the results of this analysis. The feature 
importance analysis of the MIMIC-III model identifies a set of 31 features that 
are important for the model's predictions, as well as the 
important windows for these features. The important windows almost always focus 
on the more recent part of the patients' histories, which is expected since 
death is more likely to be indicated by abnormalities in the 
later stages of the ICU stay. We also note 
that the ordering of timesteps is found to be important for some features, 
suggesting that 
the model may be picking up on trends for these features.

To validate our analysis, we use the set of features and windows estimated to 
be important to perform feature selection. We prune the features that are not 
estimated to be important 
and set the out-of-window timesteps for important features to zero. We then 
retrain the LSTM on the pruned data set and compare its performance to the 
original model on the 
held-aside test set. We further train and test 20 feature-selected models with 
31 features and windows chosen at random. The area under the ROC curve for the 
retrained 
model on the test set is 0.836, which is close to that of the original model 
(0.838) but significantly higher than the models using randomly selected 
features (mean 0.801, SD 0.015), suggesting that TIME is able to 
identify a salient subset of features and windows for this model.

%\TODO{Discussion points about this result might include: (1) a relatively 
%small 
%number of features are important for the model's predictions, (2) windows 
%important for some of these features, (3) relevant windows almost always focus 
%on more recent part of the patients' histories, (4) ordering matters for some 
%features.}

\section{Conclusions}

We have presented TIME, a method to explain black-box models having an 
explicit sequential or temporal 
structure. TIME identifies the set of important features and their 
degree of importance, and for each important feature, it identifies the window 
that the model focuses on and the significance of its ordering. It uses 
hypothesis testing and an FDR control methodology to detect 
these with statistical rigor.

Our experiments show that on synthetic data, TIME performs significantly better 
than baseline methods at identifying relevant features and timesteps.  
Additionally, in comparison to baselines, 
TIME is efficient to compute and potentially more interpretable since it 
identifies contiguous windows rather than scattered timesteps.
%,and more robust owing to its hypothesis testing approach.
We apply 
TIME to an LSTM trained to predict risk of in-hospital mortality from ICU data, 
and we identify salient features, windows and ordering in patients' clinical 
histories that the model focuses on. We show that a model trained using 
features and timesteps selected using this analysis performs nearly as well as 
the original model.

There are several limitations of this work that we plan to address in future 
research. Our approach for permutation across 
sequences currently assumes regularly sampled, time-aligned and fixed-length 
sequences, 
which excludes models trained over irregularly sampled, variable-length 
sequences.
We assume that there exists a single contiguous window that is important, which 
may not be the case. We plan to explore the use of conditional permutation 
tests~\cite{stroblConditionalVariableImportance2008} to avoid breaking 
across-feature 
correlations, though efficient and accurate estimation of conditional 
distributions for temporal data remains a challenging task.
We also plan to extend our approach to analyze sequence-sequence models,
and to consider windows that are aligned in other ways, such as on
an absolute scale (e.g., dates on the Gregorian calendar) or a relative scale 
(e.g., patient age).
%More generally, permutation importance methods, to which 
%category our method belongs, can be misled by feature 
%correlations~\cite{hookerPleaseStopPermuting2019}.

%We plan to extend our work in a number of ways. Generalizing our approach to 
%work with irregularly sampled, variable-length sequences would enable the 
%analysis of a larger class of temporal models.

%as well as  
%identifying model behavior relative to other notions of time, such as time on 

% We plan to address these considerations in future work.
% As models having an explicit sequential or temporal structure grow more 
%sophisticated and are better able to capture causal information, methods to 
%analyze such models must evolve concomitantly.
%\medskip
%\small
%\bibliographystyle{aaai}
%\bibliographystyle{unsrt}
%\begin{thebibliography}{}
%\end{thebibliography}
\clearpage
\bibliography{refs}
\bibliographystyle{icml2021}
\newpage
\setcounter{table}{0}
\setcounter{figure}{0}
\setcounter{equation}{0}
\setcounter{section}{0}

\renewcommand{\thesection}{S\arabic{section}}
\renewcommand{\thetable}{S\arabic{table}}
\renewcommand{\thefigure}{S\arabic{figure}}
\renewcommand{\theequation}{S\arabic{equation}}
% To add an 'S' prefix to a reference
\newcommand*\sref[1]{%
	S\ref{#1}}
% For 'Supplementary Figure S1'
\newcommand*\sfref[1]{%
	Supplementary Figure S\ref{#1}}
% For 'Supplementary Table S1'
\newcommand*\stref[1]{%
	Supplementary Table S\ref{#1}}
% For 'Supplementary Materials S1'
\newcommand*\smref[1]{%
	Supplementary Materials S\ref{#1}}

\newcommand{\Supp}[1]{\textcolor{Green}{}}
\newcommand{\TODO}[1]{\textcolor{Orange}{[#1]}}
\newcommand{\Comment}[1]{\textcolor{Red}{[#1]}}

\section{Importance Scores}

Equation~\ref{eqn:importance-tensor}, reiterated 
below, computes 
the importance score for window $[k_1, k_2]$ of feature $j$:
\begin{equation*}
	I \left( f, j, [k_1, k_2] \right)
	= \frac{1}{M} \sum\limits_{i=1}^{M} \Delta \bar{\mathcal{L}}_{j, [k_1, 
		k_2]}^{(i)}.
	\label{eqn:supp-importance-tensor}
\end{equation*}
Intuitively, this captures the notion that a window for a given feature is 
considered important to the model if it has a positive association with the 
target, in the 
sense that permuting its values increases the model's loss on average. In 
Proposition~\ref{prop:supp-importance-tensor}, we show that for additive models 
and under certain 
assumptions, Equation~\ref{eqn:importance-tensor} captures this 
association in terms of the covariance between the target and the 
feature function operating on the window.

\begin{prop}
	Let $\mathbf{X}_1 \ldots \mathbf{X}_D$ be independent random vectors of 
	size $L$ representing
	temporal features $\mathcal{F} = \{1, 2, \ldots, D\}$ for the additive 
	model $f(\mathbf{X}_1, \ldots 
	\mathbf{X}_D) = 
	\sum\limits_{j' \in \mathcal{F}} g_{j'}(\mathbf{X}_{j'})$.
	Let $\W$ be a window for $\bX$ perturbed according to
	Equation~\ref{eqn:perturbed-feature-tensor}, and let $\Wbar$ represent 
	the timesteps outside the window. Let
	$\g(\bX)$ decompose additively over the sequence as:
	\[ \g(\bX) = g_{j, 
	\W}(\mathbf{X}_{j, \W}) + g_{j, \Wbar}(\mathbf{X}_{j, \Wbar}) \]
	where 
	$\mathbf{X}_{j, \W}$ and $\mathbf{X}_{j, \Wbar}$ represent subsequences of 
	$\bX$ inside and outside the window respectively, and $g_{j, 
	\W}(\mathbf{X}_{j, \W})$ and $g_{j, \Wbar}(\mathbf{X}_{j, \Wbar})$ 
	represent functions over these subsequences.
	Let $\Z$ be the target and
	$\mathcal{L}$ be the quadratic loss function.

	Then, the importance score
	$I(f, \bX, \W)$ of the window $\W$ for feature $\bX$ as computed by
	Equation~\ref{eqn:importance-tensor} satisfies:
	\begin{equation}
	\begin{split}
	\E \left[ I(f, \bX, \W) \right]
	\;=\;2 \Bigg[ &\cov \bigg( \Z, g_{j, \W}(\mathbf{X}_{j, \W}) \bigg) \\
	-\;&\cov
	\bigg( g_{j, \W}(\mathbf{X}_{j, \W}), g_{j, \Wbar}(\mathbf{X}_{j, \Wbar}) 
	\bigg) \Bigg].
	\end{split}
	\end{equation}
	\label{prop:supp-importance-tensor}
\end{prop}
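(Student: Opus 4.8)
The plan is to reduce the expected importance score to the expectation of a single change-in-loss term and then expand the quadratic loss. First I would use linearity of expectation together with the assumption that the instances are i.i.d.: every summand $\Delta\mathcal{L}^{(i,l)}$ entering Equations~\ref{eqn:avg-loss-change-matrix} and~\ref{eqn:importance-tensor} has the same mean, so $\E[I(f, \bX, \W)]$ equals $\E[\Delta\mathcal{L}^{(i,l)}]$ for a single generic perturbation pair $(i,l)$ with $l \ne i$. This collapses both the average over instances $i$ and the average over the resampling set $S_i$ into one quantity.

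Next I would decompose the model output using additivity. Because $f$ is additive and $\g$ splits as $g_{j,\W}(\mathbf{X}_{j,\W}) + g_{j,\Wbar}(\mathbf{X}_{j,\Wbar})$, I set $a := g_{j,\W}(\mathbf{X}_{j,\W}^{(i)})$ for the baseline window term, $a' := g_{j,\W}(\mathbf{X}_{j,\W}^{(l)})$ for the resampled window term, and $b$ for the sum of all terms the perturbation leaves untouched, namely $g_{j,\Wbar}(\mathbf{X}_{j,\Wbar}^{(i)}) + \sum_{j' \ne 1} g_{j'}(\mathbf{X}_{j'}^{(i)})$. The perturbation of Equation~\ref{eqn:perturbed-feature-tensor} replaces only $a$ by $a'$, so the baseline output is $a+b$ and the perturbed output is $a'+b$. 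With quadratic loss the change-in-loss factors cleanly as
\[
\Delta\mathcal{L}^{(i,l)} = (\Z - a' - b)^2 - (\Z - a - b)^2 = (a' - a)\,(a' + a - 2\Z + 2b).
\]

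The heart of the argument is then to take expectations term by term using two independence facts. Since instance $l$ is drawn independently of instance $i$, the resampled term $a'$ is an independent copy of $a$: it is independent of $(a, b, \Z)$ and equal to $a$ in distribution. This immediately kills the quadratic part, $\E[a'^2 - a^2] = 0$, turns the $\Z$ cross-term into $2\cov(\Z, a)$, and turns the $b$ cross-term into $-2\cov(a, b)$. Finally I would invoke the independence of the feature vectors $\bX, \ldots, \mathbf{X}_D$: since $a = g_{j,\W}(\mathbf{X}_{j,\W})$ depends only on feature $1$ while the cross-feature part $\sum_{j' \ne 1} g_{j'}(\mathbf{X}_{j'})$ of $b$ depends only on the remaining features, that part contributes nothing to $\cov(a, b)$, leaving $\cov(a,b) = \cov\bigl(g_{j,\W}(\mathbf{X}_{j,\W}),\, g_{j,\Wbar}(\mathbf{X}_{j,\Wbar})\bigr)$. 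Assembling the three surviving contributions yields exactly the claimed expression.

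The algebra here is routine; the step that demands the most care is the bookkeeping of the independence structure, namely which quantities originate from instance $i$ versus from the resampled instance $l$, since this is precisely what forces the quadratic term to vanish and collapses the cross-terms into covariances. A secondary point worth stating carefully is the first reduction, which relies on the resampling instances being drawn from the same distribution so that each $\Delta\mathcal{L}^{(i,l)}$ shares a common expectation; I would also note explicitly that quadratic loss is taken as $(\Z - \hat{y})^2$, which accounts for the overall factor of $2$.
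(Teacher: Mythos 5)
Your proof is correct, but it takes a genuinely different route from the paper's. The paper's own proof is citation-based: it identifies the importance score as an empirical estimate of difference-based model reliance (Fisher et al., 2019), quotes that work's known formula for additive models in the tabular setting,
\begin{equation*}
\E \left[ I(f, \bX) \right] = 2 \Bigl[ \cov \bigl( \Z, \g(\bX) \bigr) - \textstyle\sum_{\bar{j} \in \mathcal{F} \backslash j} \cov \bigl( \g(\bX), g_{\bar{j}}(\mathbf{X}_{\bar{j}}) \bigr) \Bigr],
\end{equation*}
extends it to the temporal case by treating the window and out-of-window subsequences as two additive components of $\g$, and then invokes independence of $\mathbf{X}_1, \ldots, \mathbf{X}_D$ to cancel the cross-feature covariance terms. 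You instead re-derive the underlying identity from first principles: collapsing the empirical averages over $i$ and $S_i$ to a single generic pair $(i, l)$, factoring the quadratic loss difference as $(a' - a)(a' + a - 2\Z + 2b)$, and using that $a'$ is an independent copy of $a$ (independent of $(a, b, \Z)$ and equal in distribution) to annihilate the quadratic part and convert the cross terms into $2\cov(\Z, a) - 2\cov(a, b)$; your final step, where cross-feature independence reduces $\cov(a, b)$ to the within-feature covariance $\cov\bigl(g_{j, \W}(\mathbf{X}_{j, \W}), g_{j, \Wbar}(\mathbf{X}_{j, \Wbar})\bigr)$, coincides with the paper's. I checked the expansion and each expectation; the argument is sound, including the subtle point that although the replacement instances in the empirical score come from a permutation of a finite test set (so the pairs are not mutually independent across $i$), linearity of expectation only requires each pair with $l \ne i$ to consist of two independent draws, which holds. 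What your approach buys is self-containedness: it makes explicit exactly where the quadratic loss, the factor of $2$, the i.i.d.\ resampling, and each independence assumption enter, effectively re-proving the Fisher et al.\ additive-model formula rather than quoting it. What the paper's approach buys is brevity and a clear placement of the score within the model-reliance literature, at the cost of leaving those mechanics implicit in the citation.
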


\begin{proof}
	Our formulation of the importance score corresponds to an empirical 
	estimate of difference-based model reliance~\cite{fisherAllModelsAre2019}, 
	and equivalently, when quadratic loss is assumed, the permutation 
	importance score 
	in~\citet{gregoruttiCorrelationVariableImportance2017}, extended to the 
	temporal setting. In the tabular 
	setting, when the model is assumed to be 
	additive, the expected overall importance score of feature 
	$\bX$ is given by~\cite{fisherAllModelsAre2019}:
\begin{align*}
\E \left[ I(f, \bX) \right] \;=\; 2 \Bigg[ &\cov \bigg( Y, \g(\bX) \bigg) \\
-\;&\sum\limits_{\bar{j} \in \mathcal{F}
\backslash j} \cov \bigg( \g(\bX), g_{\bar{j}}(\mathbf{X}_{\bar{j}}) \bigg) 
\Bigg]
\end{align*}
where $\mathbf{X}_{\bar{j}}$ corresponds to the subset of features other than 
$j$. Extending this to temporal models, the expected 
importance score for window $\W$ of feature $\bX$ is given by:
\begin{align*}
\E \left[ I(f, \bX, \W) \right] \;=\; 2 \Bigg[ &\cov \bigg( Y, g_{j, 
\W}(\mathbf{X}_{j, \W}) \bigg) \\
-\;&\sum\limits_{\bar{j} \in \mathcal{F}
	\backslash j} \cov \bigg( g_{j, \W}(\mathbf{X}_{j, \W}), 
	g_{\bar{j}}(\mathbf{X}_{\bar{j}}) \bigg) \; \\
-\;&\cov \bigg( g_{j, \W}(\mathbf{X}_{j, \W}), g_{j, \Wbar}(\mathbf{X}_{j, 
\Wbar}) 
\bigg) \Bigg] \\
\;=\;2 \Bigg[ &\cov \bigg( \Z, g_{j, \W}(\mathbf{X}_{j, \W}) \bigg) \\
-\;&\cov
\bigg( g_{j, \W}(\mathbf{X}_{j, \W}), g_{j, \Wbar}(\mathbf{X}_{j, \Wbar}) 
\bigg) \Bigg]
\end{align*}
since $\cov \bigg( g_{j, \W}(\mathbf{X}_{j, \W}), 
g_{j'}(\mathbf{X}_{\bar{j}}) \bigg) = 0\;\forall\;\bar{j} \in \mathcal{F} 
\backslash j$ as $\mathbf{X}_1, \ldots \mathbf{X}_D$ are independent.
\end{proof}

\begin{cor}
	Let $\Wstar$ represent the relevant window, so that $\g(\bX) =
	g_{j, \Wstar}(\mathbf{X}_{j, \Wstar})$ and $g_{j, \Wstarbar}(\mathbf{X}_{j, 
	\Wstarbar}) = 
	0$. Then:
	\begin{equation}
	\begin{split}
	\E \left[ I(f, \bX, [1, L]) \right] &= \E \left[ I(f, \bX, \Wstar) \right]\\
	&= 2\;\cov \left( \Z, \g(\bX) \right)
	\end{split}
	\label{eqn:ground-truth-importance-score}
	\end{equation}
\end{cor}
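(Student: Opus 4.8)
The plan is to apply Proposition~\ref{prop:supp-importance-tensor} directly to each of the two windows named in the statement, $\Wstar$ and $[1, L]$, and to observe that in both cases the cross-covariance term appearing in that Proposition collapses to zero, leaving the same expression $2\cov(\Z, \g(\bX))$.

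First I would instantiate the Proposition with $\W = \Wstar$. Its right-hand side is
\[
2\Bigg[ \cov\bigg(\Z, g_{j, \Wstar}(\mathbf{X}_{j, \Wstar})\bigg) - \cov\bigg(g_{j, \Wstar}(\mathbf{X}_{j, \Wstar}),\, g_{j, \Wstarbar}(\mathbf{X}_{j, \Wstarbar})\bigg) \Bigg].
\]
By the defining hypothesis of the relevant window, $g_{j, \Wstarbar}(\mathbf{X}_{j, \Wstarbar}) = 0$, so the second covariance is taken against the degenerate zero random variable and vanishes; substituting $g_{j, \Wstar}(\mathbf{X}_{j, \Wstar}) = \g(\bX)$ into the surviving term gives $\E[I(f, \bX, \Wstar)] = 2\cov(\Z, \g(\bX))$.

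Next I would instantiate the Proposition with $\W = [1, L]$. Here the complementary index set $\Wbar$ is empty, so the additive decomposition assumed in the Proposition forces $g_{j, \Wbar}(\mathbf{X}_{j, \Wbar}) = 0$ and $g_{j, [1, L]}(\mathbf{X}_{j, [1, L]}) = \g(\bX)$. The identical cancellation then yields $\E[I(f, \bX, [1, L])] = 2\cov(\Z, \g(\bX))$, which matches the value obtained for $\Wstar$ and closes the chain of equalities asserted in Equation~\ref{eqn:ground-truth-importance-score}.

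The only thing to be careful about — and it is bookkeeping rather than genuine difficulty — is verifying that the additive split required by the Proposition is legitimate for both window choices and that the out-of-window term reduces to the zero function in each (by hypothesis for $\Wstar$, and trivially because the complement is empty for $[1, L]$). Once that is granted the result is immediate, since the covariance of any random variable with the zero function is zero; there is no further computation to perform beyond these two substitutions.
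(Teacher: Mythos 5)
Your proposal is correct and follows exactly the route the paper intends: the Corollary is stated without a separate written proof precisely because it is the direct instantiation of Proposition~\ref{prop:supp-importance-tensor} at $\W = \Wstar$ and $\W = [1, L]$, with the cross-covariance term vanishing since $g_{j, \Wstarbar}(\mathbf{X}_{j, \Wstarbar}) = 0$ (by hypothesis in the first case, trivially via the empty complement in the second). Your explicit check that the additive decomposition remains legitimate for $\W = [1, L]$ is the only nontrivial bookkeeping, and you handle it correctly.
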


\section{Synthetic Data Generation}
\label{section:supp-synthetic-data}

We use Markov chains to generate time series 
data, as shown in Figure~\ref{fig:sim-generator}.
Each feature is associated with a randomly selected window and a pair of Markov 
chains, one each to generate values for in-window and out-of-window timesteps.
The number of states in each chain is sampled uniformly at random between 2 and 
5.
The features include a combination of continuous and discrete features.
Each state $m$ is associated with a Gaussian 
random variable $S_m \sim \mathcal{N}(\mu_m, \sigma_m^2)$ (for continuous 
features) or an integer value (for discrete features) and transition 
probabilities $p_{mn}$ to 
other states $n$ within the same chain. The mean, standard deviation, and 
transition probabilities for each state are sampled uniformly at random.
The sequence for a given instance and feature is generated 
via a random walk through the chains. For example, a sequence $i$ for 
feature $j$ with 5 timesteps and underlying window [2, 3] may be generated 
as: 
$\mathbf{x}_j^{(i)} = \left< x_{j, 
	1}^{(i)}, x_{j, 2}^{(i)}, x_{j, 3}^{(i)}, x_{j, 4}^{(i)}, x_{j, 5}^{(i)} 
\right> = \left< s'_{0, 1}, s'_{2, 2}, s_{1, 3}, s_{1, 4}, s'_{0, 5} 
\right>$, where $s_{m, t}$ is sampled from $S_m$ at timestep $t$. 
For some continuous 
features, sampled values are aggregated over time to model 
increasing, constant, or decreasing trends, as shown in 
Figure~\ref{fig:sim-series}. In this case, for each timestep $t$, 
$x_{j, t} = s_{m, t} + \sum\limits_{k=1}^{t-1} x_{j, k}$.
\begin{figure}[h]
	\centering
	\begin{subfigure}[c]{\linewidth}
		\includegraphics[width=\textwidth]{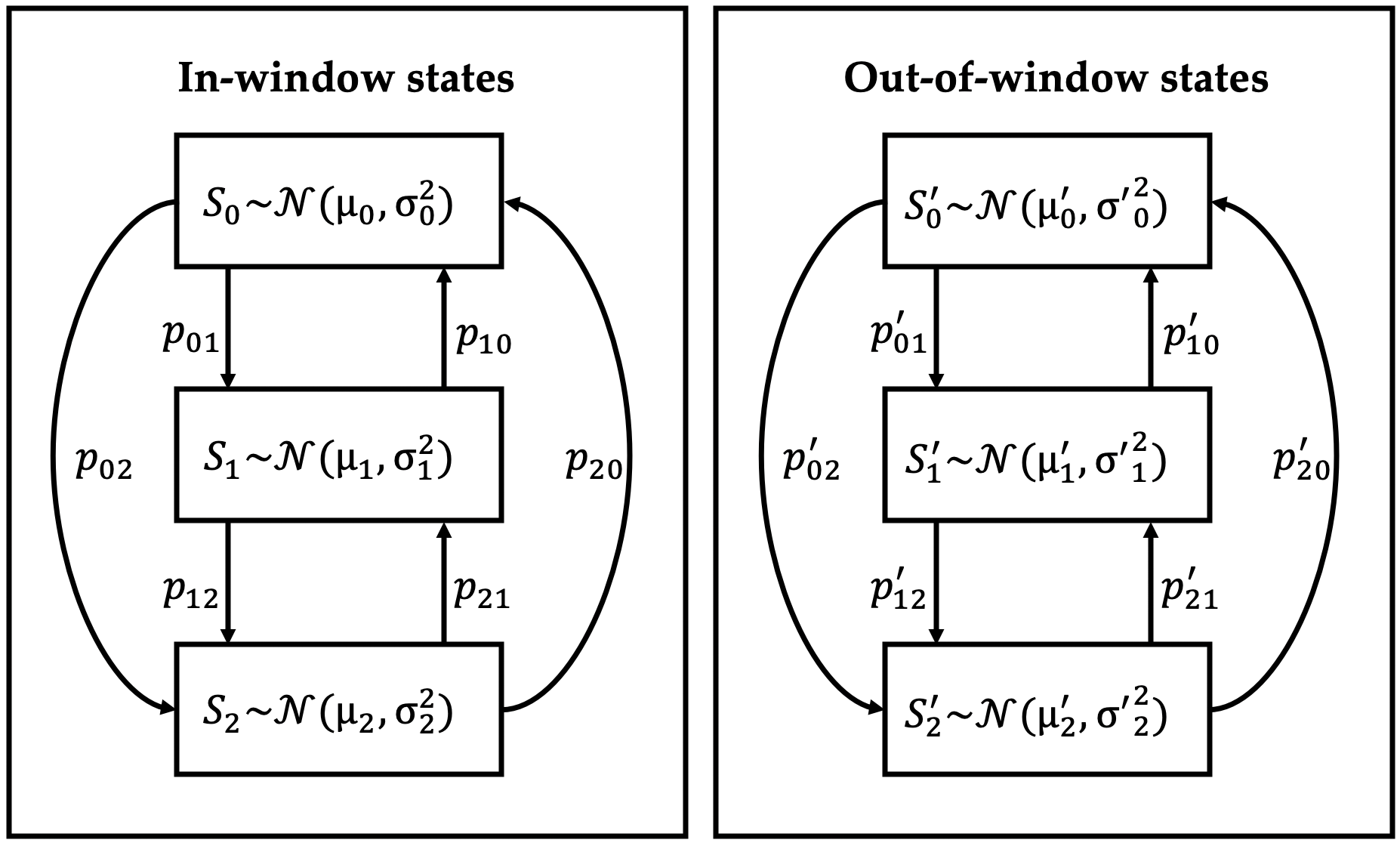}
		\caption{}
		\label{fig:sim-generator}
	\end{subfigure} \hfill
	\begin{subfigure}[c]{\linewidth}
		\includegraphics[width=\textwidth]{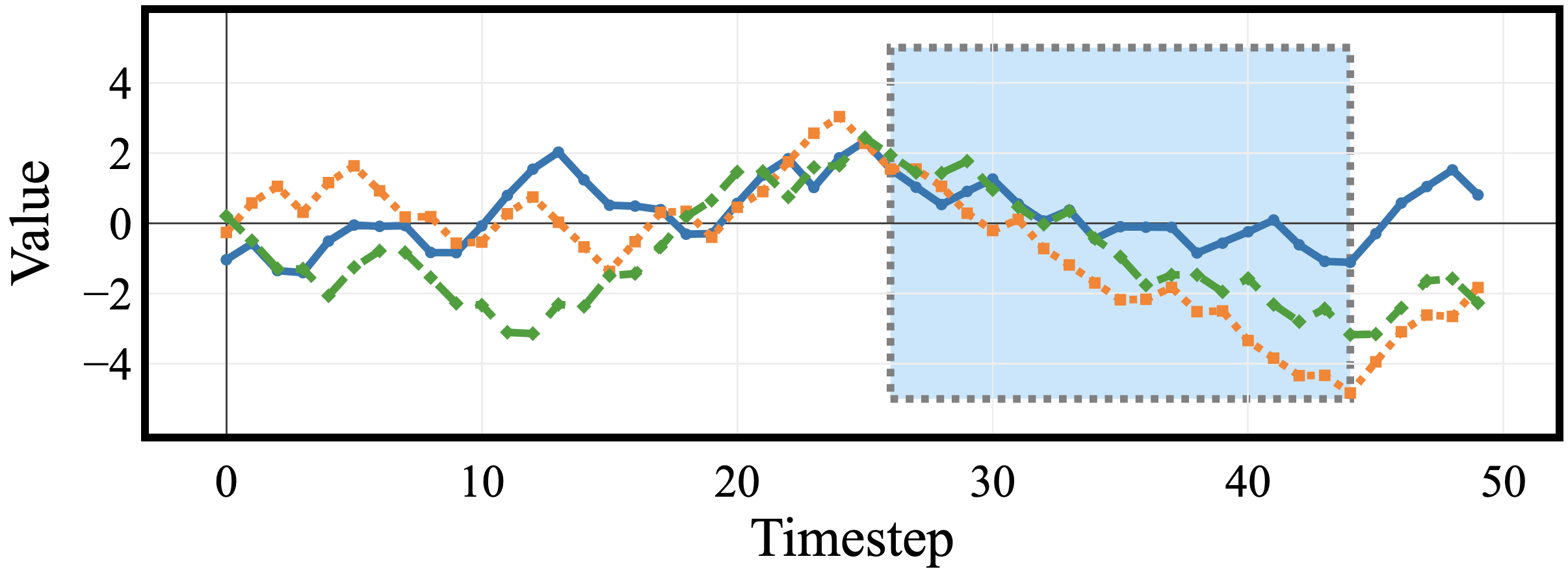}
		\caption{}
		\label{fig:sim-series}
	\end{subfigure}
	\caption{(a) Generator for a continuous feature consisting of two Markov 
		chains, one each for in-window and out-of-window states. Here each 
		Markov 
		chain consists of three states, and each state is associated with a 
		Gaussian random variable.
		(b) Three sequences generated via random walks through the chains, 
		with  
		the sampled values aggregated over time to create trends. The window is 
		represented by blue shading.}
	\label{fig:sim}
\end{figure}

\section{Empirical Results}
\paragraph{Additional baseline comparisons.}
In addition to the comparisons shown in 
Table~\ref{tab:baseline-comparison}, we also tested variants of two 
baseline methods: (i) SAGE-z, an approximation of SAGE provided by its authors
where held-out features are sampled from all-zero reference values, instead of 
mean values as in SAGE-m, and (ii) PERM-f, a variant of PERM where FDR is 
controlled at the 0.1 level (as is done for TIME) using the BH 
procedure~\cite{benjaminiControllingFalseDiscovery1995} 
after computing importance scores for each timestep. These results are 
shown in 
Table~\ref{tab:supp-baseline-comparison} along with the results from 
Table~\ref{tab:baseline-comparison}.

We also performed baseline comparisons using a larger feature set composed of 
30 features, 10 out of which were randomly selected as relevant. 
Table~\ref{tab:supp-baseline-comparison-30} shows these results, aggregated 
over 100 different models. In case of SAGE and SAGE-m, these results are 
aggregated over 86 and 99 models respectively, as convergence was not achieved 
for some models 
within the designated time limit (72 hours per model). PERM-f is not included 
since it did not identify any features as important. The results corroborate 
the conclusions drawn from Table~\ref{tab:baseline-comparison}.

\begin{table}[h]
	\centering
	\caption{Comparison between different explanation methods, indicating 
		average power and FDR for detecting relevant features and timesteps, as 
		well the median runtime for each method.}
	\label{tab:supp-baseline-comparison}
	\resizebox{\linewidth}{!}{%
		\begin{tabular}{|l|ll|ll|r|}
			\hline
			\multicolumn{1}{|c|}{} & 
			\multicolumn{2}{c|}{\begin{tabular}[c]{@{}c@{}} 
					Features\end{tabular}} & 
			\multicolumn{2}{c|}{\begin{tabular}[c]{@{}c@{}} 
					Timesteps\end{tabular}} & \multicolumn{1}{c|}{} \\ 
			\cline{2-5}
			\multicolumn{1}{|c|}{\multirow{-2}{*}{Method}} & 
			\multicolumn{1}{c|}{\begin{tabular}[c]{@{}c@{}} 
					Power\end{tabular}} & 
			\multicolumn{1}{c|}{\begin{tabular}[c]{@{}c@{}} 
					FDR\end{tabular}} & 
			\multicolumn{1}{c|}{\begin{tabular}[c]{@{}c@{}} 
					Power\end{tabular}} & 
			\multicolumn{1}{c|}{\begin{tabular}[c]{@{}c@{}} 
					FDR\end{tabular}} & 
			\multicolumn{1}{c|}{\multirow{-2}{*}{\begin{tabular}[c]{@{}c@{}}
						Runtime\\ (seconds)\end{tabular}}} \\ \hline
			TIME & \textbf{0.930} & 0.037 & \textbf{0.923} & 
			0.054 & 371 \\
			TIME-n & 0.922 & \textbf{0.018} & 0.915 & 0.021 & 371 \\
			LIME & 0.710 & 0.290 & 0.692 & 0.308 & 682 \\
			SAGE & 0.806 & 0.194 & 0.786 & 0.214 & 15318 \\
			SAGE-m & 0.758 & 0.242 & 0.731 & 0.269 & 124 \\
			SAGE-z & 0.656 & 0.344 & 0.648 & 0.352 & 
			\textbf{44} \\
			PERM & 0.836 & 0.164 & 0.818 & 0.182 & 1478 \\
			PERM-f & 0.326 & 0.024 & 0.312 & 
			\textbf{0.008} & 1478 
			\\ \hline
		\end{tabular}%
	}
\end{table}

\begin{table}[h]
	\centering
	\caption{Comparison between different explanation methods for models 
	composed of 30 features, indicating 
		average power and FDR for detecting relevant features and timesteps, as 
		well the median runtime for each method.}
	\label{tab:supp-baseline-comparison-30}
	\resizebox{\linewidth}{!}{%
		\begin{tabular}{|l|ll|ll|r|}
			\hline
			\multicolumn{1}{|c|}{} & 
			\multicolumn{2}{c|}{\begin{tabular}[c]{@{}c@{}} 
					Features\end{tabular}} & 
			\multicolumn{2}{c|}{\begin{tabular}[c]{@{}c@{}} 
					Timesteps\end{tabular}} & \multicolumn{1}{c|}{} \\ 
			\cline{2-5}
			\multicolumn{1}{|c|}{\multirow{-2}{*}{Method}} & 
			\multicolumn{1}{c|}{\begin{tabular}[c]{@{}c@{}} 
					Power\end{tabular}} & 
			\multicolumn{1}{c|}{\begin{tabular}[c]{@{}c@{}} 
					FDR\end{tabular}} & 
			\multicolumn{1}{c|}{\begin{tabular}[c]{@{}c@{}} 
					Power\end{tabular}} & 
			\multicolumn{1}{c|}{\begin{tabular}[c]{@{}c@{}} 
					FDR\end{tabular}} & 
			\multicolumn{1}{c|}{\multirow{-2}{*}{\begin{tabular}[c]{@{}c@{}}
						Runtime\\ (seconds)\end{tabular}}} \\ \hline
			TIME & \textbf{0.914} & 0.033 & \textbf{0.909} & 0.037 & 2810
\\
			TIME-n & 0.909 & \textbf{0.015} & 0.904 & \textbf{0.011} & 2810
\\
			LIME & 0.728 & 0.272 & 0.680 & 0.320 & 1704
\\
			SAGE & 0.799 & 0.201 & 0.743 & 0.257 & 177038
\\
			SAGE-m & 0.692 & 0.308 & 0.642 & 0.358 & 2463
\\
			SAGE-z & 0.609 & 0.391 & 0.583 & 0.417 & \textbf{667}
\\
			PERM & 0.830 & 0.170 & 0.792 & 0.208 & 11365
\\ \hline
		\end{tabular}%
	}
\end{table}

\paragraph{Performance vs. test set size for classification models.} 
Figure~\ref{fig:supp-sims-classification} shows results for synthetic 
classification 
models analogous to those for synthetic 
regression models shown in Figure~\ref{fig:sims-regression}. The results 
show the same patterns as for synthetic
regression models: the power for detecting relevant features, 
timesteps and ordering increases as the size of the test set used to examine 
the model is increased, and the FDR is well-controlled at the 0.1 level.

\begin{figure}[t]
	\centering
	\begin{subfigure}[c]{\linewidth}
		\includegraphics[width=\textwidth]{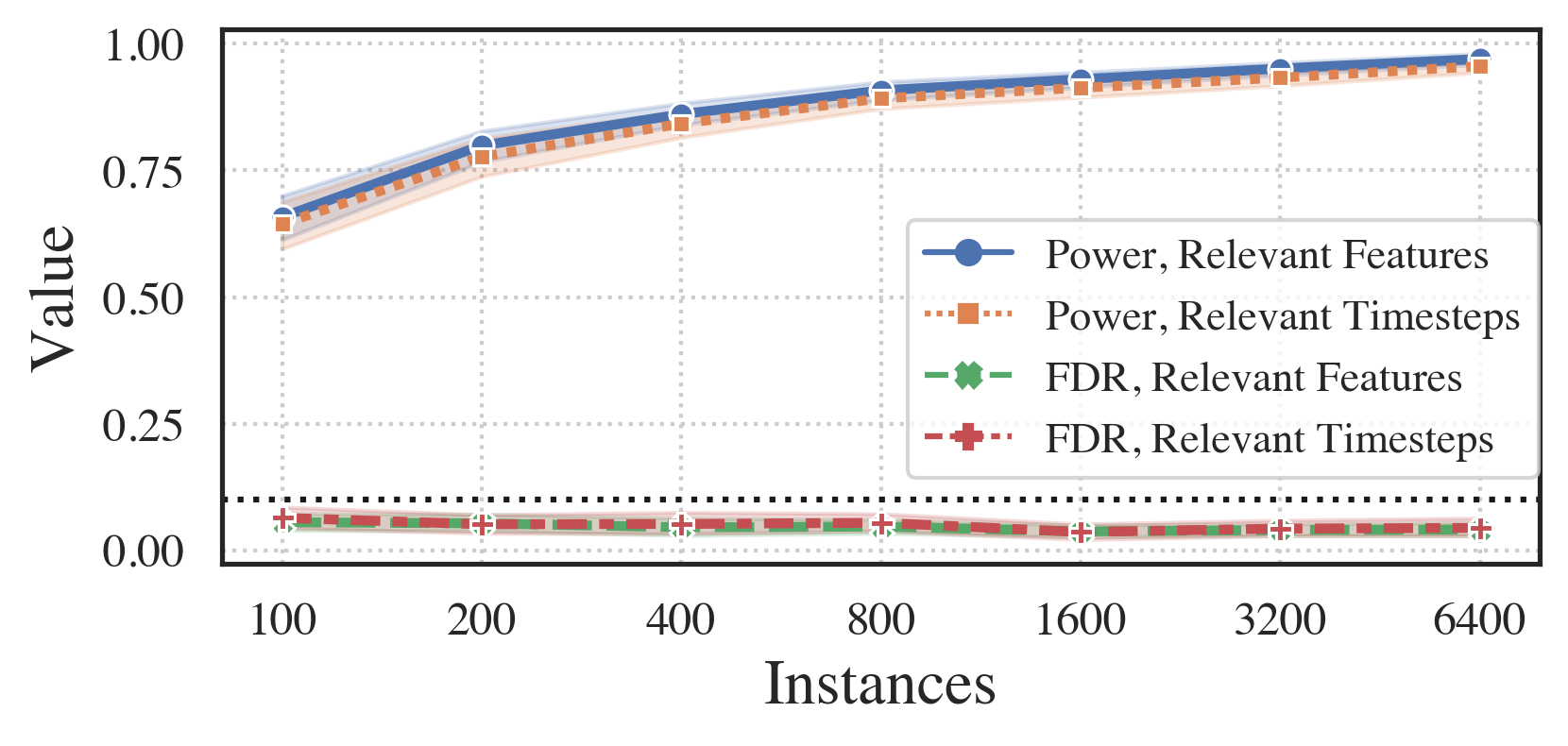}
		\caption{}
		\label{fig:supp-sims-timesteps-classification}
	\end{subfigure} \\
	\begin{subfigure}[c]{\linewidth}
		\includegraphics[width=\textwidth]{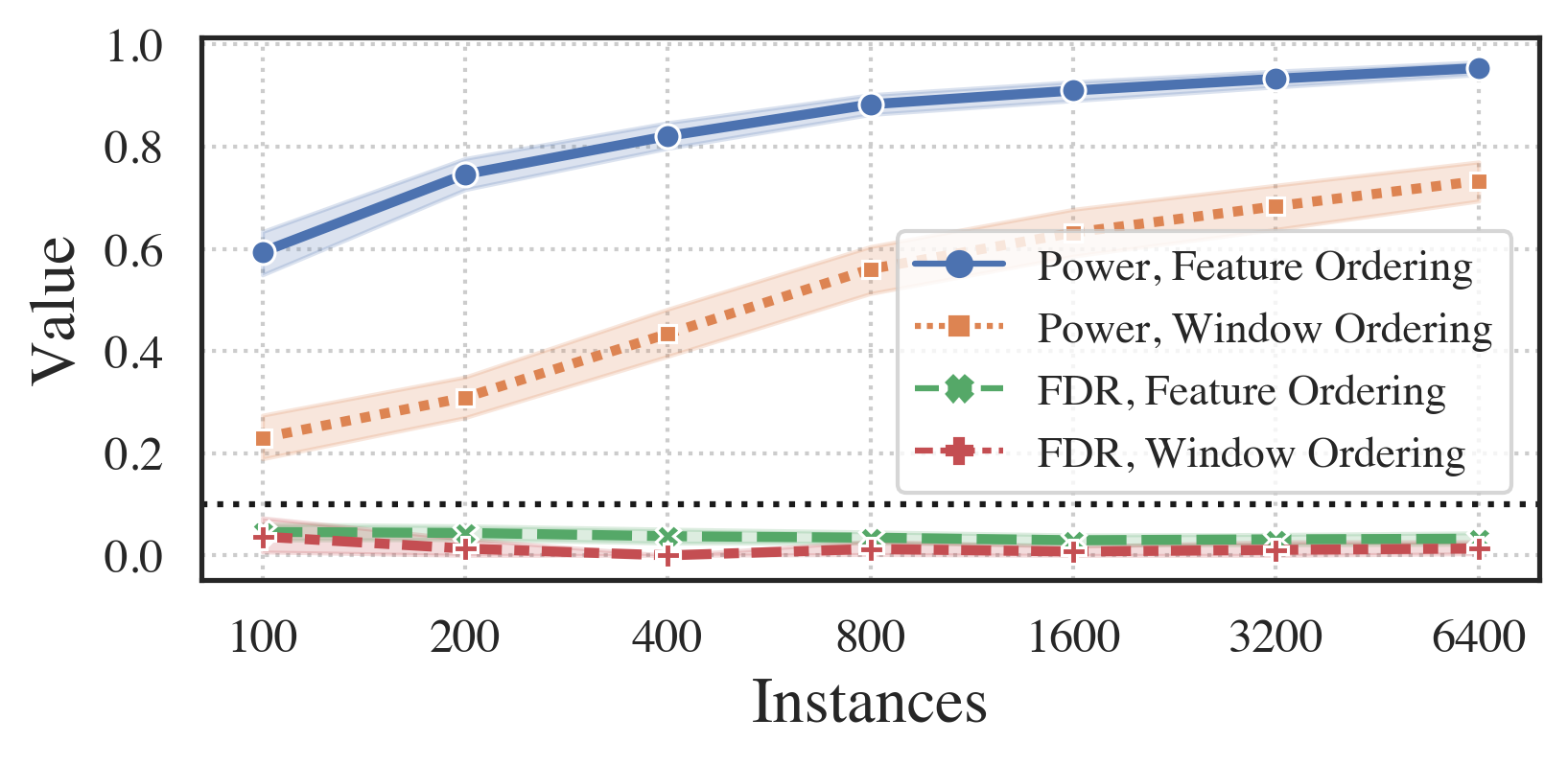}
		\caption{}
		\label{fig:supp-sims-ordering-classification}
	\end{subfigure}
	\caption{(a) Average power and FDR for detecting relevant features and 
		timesteps as a function of test set size for classification models. 
		(b) 
		Average power and FDR for 
		detecting ordering relevance for features and windows as a function of 
		test set size. The bands represent 95\% confidence intervals, and the 
		dotted horizontal line represents the 0.1 level at which FDR is 
		controlled.}
	\label{fig:supp-sims-classification}
\end{figure}

\section{Computation}

\paragraph{Time complexity.}
The time complexity of our method is 
$\mathcal{O}(M P L \min\{R \log{L}, D\})$, where $M$, $D$, 
$L$, $R$ and $P$ are the 
test-set size, number of features, sequence length, number of relevant 
features and number of permutations respectively. The 
logarithmic term arises since the window search 
algorithm partitions the search space in half at each step of the window 
search. In practice, as shown in Table~\ref{tab:baseline-comparison}, 
TIME is often significantly faster than PERM, since the permutation of 
high-value subsets (windows) can leverage a vectorized implementation to yield 
faster performance than permutations of their constituent timesteps.

\paragraph{Distributed implementation.}
TIME can be computed significantly faster by leveraging a distributed computing 
environment, 
where each node analyzes a feature or a subset of features. The results shown 
in Figure~\ref{fig:sims-regression} and Figure~\ref{fig:mimic} were 
produced with a distributed implementation using
\href{https://research.cs.wisc.edu/htcondor/index.html}{HTCondor}. For 
a fair comparison, the distributed implementation 
was disabled while comparing running times of different baseline methods shown 
in Table~\ref{tab:baseline-comparison}.

The code implementing our algorithm and simulations is available 
at 
\href{https://github.com/Craven-Biostat-Lab/anamod}{https://github.com/Craven-Biostat-Lab/anamod}.
 The code for training the MIMIC-III model is based 
on~\citet{harutyunyanMultitaskLearningBenchmarking2017}, available at
\href{https://github.com/YerevaNN/mimic3-benchmarks}{https://github.com/YerevaNN/mimic3-benchmarks}.
 The MIMIC-III data 
set~\cite{johnsonMIMICIIIFreelyAccessible2016} is 
available at \href{https://mimic.physionet.org/}{https://mimic.physionet.org/}.

\end{document}